\newtheorem{theorem}{Theorem}[section]
\newtheorem{corollary}{Corollary}
\begin{document}
\title{L-Lipschitz Gershgorin ResNet Network }

\author{Marius~F.~R.~Juston$^{1}$,
        William~R.~Norris$^{2}$,
        Dustin~Nottage$^{3}$,
        Ahmet~Soylemezoglu$^{3}$

\thanks{Marius F. R. Juston$^{1}$ is with The Grainger College of Engineering, Industrial and Enterprise Systems Engineering Department, University of Illinois Urbana-Champaign, Urbana, IL 61801-3080 USA (email: mjuston2@illinois.edu).}


\thanks{William R Norris$^{2}$ is with The Grainger College of Engineering, Industrial and Enterprise Systems Engineering Department, University of Illinois Urbana-Champaign, Urbana, IL 61801-3080 USA (email: wrnorris@illinois.edu).}

\thanks{Construction Engineering Research Laboratory$^{3}$, U.S. Army Corps of Engineers Engineering Research and Development Center, IL, 61822, USA}

\thanks{This research was supported by the U.S. Army Corps of Engineers Engineering Research and Development Center, Construction Engineering Research Laboratory.}
}

\markboth{IEEE TRANSACTIONS ON XXX XXXX, XXX, XXX, September~2023}%
{Shell \MakeLowercase{\textit{et al.}}: Bare Demo of IEEEtran.cls for IEEE Journals}


\maketitle

\begin{abstract}
Deep residual networks (ResNets) have demonstrated outstanding success in computer vision tasks, attributed to their ability to maintain gradient flow through deep architectures. Simultaneously, controlling the Lipschitz bound in neural networks has emerged as an essential area of research for enhancing adversarial robustness and network certifiability. This paper uses a rigorous approach to design $\mathcal{L}$-Lipschitz deep residual networks using a Linear Matrix Inequality (LMI) framework. The ResNet architecture was reformulated as a pseudo-tri-diagonal LMI with off-diagonal elements and derived closed-form constraints on network parameters to ensure $\mathcal{L}$-Lipschitz continuity. To address the lack of explicit eigenvalue computations for such matrix structures, the Gershgorin circle theorem was employed to approximate eigenvalue locations, guaranteeing the LMI's negative semi-definiteness. Our contributions include a provable parameterization methodology for constructing Lipschitz-constrained networks and a compositional framework for managing recursive systems within hierarchical architectures. These findings enable robust network designs applicable to adversarial robustness, certified training, and control systems. However, a limitation was identified in the Gershgorin-based approximations, which over-constrain the system, suppressing non-linear dynamics and diminishing the network's expressive capacity. 
\end{abstract}

\begin{IEEEkeywords}
Linear Matrix Inequalities, Lipschitz continuity, Deep residual networks, Adversarial robustness, Gershgorin circle theorem, semi-definite programming
\end{IEEEkeywords}

\IEEEpeerreviewmaketitle

\section{Introduction} \label{s_Intro}

\IEEEPARstart{T}{he} robustness of deep neural networks (DNNs) is a critical challenge, mainly when applied in safety-sensitive domains where small adversarial perturbations can lead to dangerous situations such as the misclassification of important objects. One approach to address this issue is by enforcing Lipschitz constraints on the network architectures. These constraints guarantee that small changes in the input will not cause significant changes in the output. This property is vital for certifying robustness against adversarial attacks, which involve introducing slight noise to modify the expected classification output result \cite{Inkawhich2019, Goodfellow2014}. The Lipschitz constant is a key measure to bound the network's sensitivity to input perturbations. Specifically, a $\mathcal{L}$-Lipschitz network can be theoretically guaranteed that the output remains stable within a defined "stability sphere" around each input, making it resistant to adversarial attacks up to a certain magnitude \cite{Tsuzuku2018}.

To achieve this, several methods have been proposed to enforce Lipschitz constraints on neural networks, including spectral normalization \cite{Miyato2018, Bartlett2017}, orthogonal parameterization \cite{Prach2022}, and more recent approaches such as Convex Potential Layers (CPL) and Almost-Orthogonal Layers (AOL) \cite{Meunier2022, Prach2022}. The previous works have been shown to be formulated under a unifying semi-definitive programming architecture, which possesses the constraints on the networks as LMIs \cite{Araujo2023}. However, ensuring Lipschitz constraints in deep architectures, particularly residual networks (ResNets), presents unique challenges due to their recursive structure. While prior work has made strides in constraining individual layers \cite{Araujo2023, Meunier2021} and generating a unifying semi-definite programming approach, the generalized deep residual network formulation presents issues in the pseudo-tri-diagonal structure of its imposed LMI. 

Furthermore, multi-layered general Feedforward Neural Networks (FNN) have been shown to generate block tri-diagonal matrix LMi formulations \cite{Xu2024} due to their inherent network structure, which in contrast to the residual formulation, yield explicit solutions \cite{Sandryhaila2013, Agarwal2019}. However, due to the off-diagonal structure of the network, the direct application of the exact eigenvalue computation is not feasible, making the solution process significantly more complex.

Previous work has also demonstrated an iterative approach by utilizing projected gradient descent optimization or a regularization term on the estimated Lipschitz constant to ensure a constraint on the Lipschitz constraint \cite{Gouk2021, Aziznejad2020, Bear2024}. While this guarantees an iterative enforcement of the Lipschitz constraint, it does not ensure a theoretical Lipschitz guarantee across the entire network until this convergence. However, the advantage of this technique is its generalizability, which allows for the utilization of more general network structures.

\subsection{Contributions}

This paper introduces the formulation of deep residual networks as Linear Matrix Inequalities (LMI). It derives closed-form constraints on network parameters to ensure theoretical $\mathcal{L}$-Lipschitz constraints. The LMI was structured as a tri-diagonal matrix with off-diagonal components, which inherently complicates the derivation of closed-form eigenvalue computations. To address this limitation, the Gershgorin circle theorem was employed to approximate the eigenvalue locations. The Gershgorin circles enabled the derivation of closed-form constraints that guaranteed the negative semi-definiteness of the LMI. 

Additionally, this paper demonstrates a significant limitation of the Gershgorin circle theorem in this context: the derived approximations lead to over-constraining the system, effectively suppressing the network's non-linear components. This, in turn, makes the network act as a simple linear transformation instead. 

Moreover, while \cite{Araujo2023}'s work generates a closed-form solution for a residual network, it is limited to considering a single inner layer. In contrast, this paper presents a more general formulation that accommodates a more expressive inner layers system within the residual network system, offering greater flexibility and broader applicability.

\section{LMI Formulation}

Following the works for \cite{Araujo2023}, who defined a Lipschitz neural network as a constrained LMI problem to define a residual network, limitations in their approach were identified. Specifically, their formulation resulted in a single-layered residual network, which is inherently less expressive compared to the generalized deep-layered residual network popularized by architectures such as ResNet and its variants \cite{He2015ResNet, Szegedy2016, Zagoruyko2016, Hu2017, Xie2016}. These deeper networks perform better due to the multiple inner layers that compose the modules, which allows for more complex latent space transformations and thus increases the network's expressiveness. This research focuses on establishing constraints for the inner layers to maintain the $\mathcal{L}$-Lipschitz condition while maximizing the expressiveness of the residual network for larger inner layers. As such, the inner layers of the residual network were represented as a recursive system of linear equations:
\begin{align}
    x_{k + 1} &= A_k x_k + B_k w_{k,n} \nonumber \\
    w_{k,n} &=  \sigma_n(C_n w_{k,n - 1} + b_n ) \nonumber \\
    & \vdots \nonumber \\
    w_{k,1} &=  \sigma_1(C_1 x_{k} + b_1 ).
\end{align}
Where each of the layer parameters were defined as $C_l \in \mathbb{R}^{d_l \times d_{l - 1}}, b_l \in \mathbb{R}^{d_l}$ for $l \in \{1,\cdots,n\}$. When $n= 1$, the formulation reduced to the one presented in \cite{Araujo2023}, rendering it redundant in its derivation. The goal of the LMI was to maintain the Lipschitz constraint formulated as $\norm{x'_{k + 1} - x_{k + 1}} \leq \mathcal{L}\norm{x'_{k} - x_{k}}$.

Given that this system could be represented as a large recursive system, it was possible to split all the constraints of the inner layers as a set of LMI conditions similar to \cite{Araujo2023, Xu2024, 10.5555/3454287.3455312}. For the most general LMI constraint definition, the activation functions were assumed to not necessarily be the ReLU function but a general element-wise activation function, which were $L$-smooth and $m$-strongly convex, where $L_i \ge m_i$. Thus, the general activation function quadratic constraint was used \cite{Araujo2023, Xu2024}:

\begin{small}
\begin{align}
\begin{bmatrix}
        v_k - v'_k \\
        w'_{k, i} - w_{k, i}
    \end{bmatrix}^\top  
    \begin{bmatrix}
        -2 L_i m_i \Lambda_i  & (m_i + L_i)\Lambda_i \\
         (m_i + L_i)\Lambda_i & -2 \Lambda_i
    \end{bmatrix}
    \begin{bmatrix}
        v_k - v'_k  \\
        w'_{k, i} - w_{k,i}
    \end{bmatrix} \leq 0 
\end{align}
\end{small}

where $\Lambda_n$ must be a positive definitive diagonal matrix. Given that $v_k - v'_k =  C_n \left(w_{k, n - 1}  - w'_{k, n - 1}\right)$ the inequality thus becomes the following quadratic constraints, where $\Delta w_{k, i}$ was defined as $\Delta w_{k, i} =  w'_{k, i} - w_{k,i}$,

\begin{tiny}
\begin{align}
    \begin{bmatrix}
        C_1 \left(x'_k - x_k \right) \\
        \Delta w_{k, 1}
    \end{bmatrix}^\top  &\begin{bmatrix}
        -2 L_1 m_1 \Lambda_1  & (m_1 + L_1)\Lambda_1 \\
         (m_1 + L_1)\Lambda_1 & -2 \Lambda_1
    \end{bmatrix}\begin{bmatrix}
        C_1 \left(x'_k - x_k \right) \\
        \Delta w_{k, 1}
    \end{bmatrix} \leq 0,  \nonumber \\
    \begin{bmatrix}
        C_2 \left(\Delta w_{k, 1} \right) \\
        \Delta w_{k, 2}
    \end{bmatrix}^\top  &\begin{bmatrix}
        -2 L_2 m_2 \Lambda_2  & (m_2 + L_2)\Lambda_2 \\
         (m_2 + L_2)\Lambda_2 & -2 \Lambda_2
    \end{bmatrix}\begin{bmatrix}
        C_2 \left(\Delta w_{k, 1} \right) \\
        \Delta w_{k, 2}
    \end{bmatrix} \leq 0, \nonumber \\
    &\vdots \nonumber \\
        \begin{bmatrix}
        C_n \left(\Delta w_{k, n-1} \right) \\
        \Delta w_{k, n}
    \end{bmatrix}^\top  &\begin{bmatrix}
        -2 L_n m_n \Lambda_n  & (m_n + L_n)\Lambda_n \\
         (m_n + L_n)\Lambda_n & -2 \Lambda_n
    \end{bmatrix}\begin{bmatrix}
        C_n \left(\Delta w_{k, n -1 } \right) \\
        \Delta w_{k, n}
    \end{bmatrix} \leq 0 .
\end{align}
\end{tiny}
To combine the LMIs, a concatenated vector of all the $w_{k,n}$ and $x_k$ was created to sum all the conditions and solve them all together. The following LMI could thus be formulated as the summation in Equation \eqref{eqn:lmi_formulation}.
\begin{table*}
\centering
\begin{align}
\begin{bmatrix}
    x'_k - x_k \\
    w'_{k, 1} - w_{k, 1}\\
    w'_{k, 2} - w_{k, 2}\\
    \vdots \\
    w'_{k, n - 1} - w_{k, n - 1} \\
    w'_{k, n} - w_{k, n}
\end{bmatrix}^\top  \begin{bmatrix}
    0_{d_x} & I_{d_x} \\
    \vdots & 0_{d_1} \\
    \vdots & \vdots \\
    \vdots & \vdots \\
    0_{d_{n - 1}} & \vdots \\
    I_{d_x} & 0_{d_x}
\end{bmatrix} \begin{bmatrix}
    A_k^\top A_k - \mathcal{L}^2I & A_k^\top  B_k \\
    B_k^\top A_k & B_k^\top  B_k
\end{bmatrix}\begin{bmatrix}
    0_{d_x} & I_{d_x} \\
    \vdots & 0_{d_1} \\
    \vdots & \vdots \\
    \vdots & \vdots \\
    0_{d_{n - 1}} & \vdots \\
    I_{d_x} & 0_{d_x}
\end{bmatrix}^\top 
\begin{bmatrix}
    x'_k - x_k \\
    w'_{k, 1} - w_{k, 1}\\
    w'_{k, 2} - w_{k, 2}\\
    \vdots \\
    w'_{k, n - 1} - w_{k, n - 1} \\
    w'_{k, n} - w_{k, n}
\end{bmatrix}  +  \nonumber \\
\sum_{l = 1}^n 
\begin{bmatrix}
    x'_k - x_k \\
    w'_{k, 1} - w_{k, 1}\\
    w'_{k, 2} - w_{k, 2}\\
    \vdots \\
    w'_{k, n - 1} - w_{k, n - 1} \\
    w'_{k, n} - w_{k, n}
\end{bmatrix}^\top  E_i^\top  \begin{bmatrix}
    C_l & 0 \\
    0 & I
\end{bmatrix}^\top   \begin{bmatrix}
        -2 L_l m_l \Lambda_l  & (m_l + L_l)\Lambda_i \\
         (m_l + L_l)\Lambda_l & -2 \Lambda_l
    \end{bmatrix}\begin{bmatrix}
    C_l & 0 \\
    0 & I
\end{bmatrix}
E_i
\begin{bmatrix}
    x'_k - x_k \\
    w'_{k, 1} - w_{k, 1}\\
    w'_{k, 2} - w_{k, 2}\\
    \vdots \\
    w'_{k, n - 1} - w_{k, n - 1} \\
    w'_{k, n} - w_{k, n}
\end{bmatrix} \leq 0, \label{eqn:lmi_formulation}
\end{align} 
\end{table*}
Where,
\begin{align}
    D_l &= \sum_{i = 1}^l d_i,  \\
    E_l &: \{0, 1\}^{\left(d_{l} + d_{l - 1}\right) \times D_n}, \\ [E_l]_{ij} &= \begin{cases}
    1 & \text{if $j - D_l = i$} \nonumber \\
    0 & \text{else}
\end{cases},
\end{align}
moreover, $i$ and $j$ represented the row and column, respectively. The $E_l$ matrix represented a "selection" vector to ensure that the proper variables were used for the parameterization. Which gave the following resultant LMI in Equation \eqref{eqn:explicit_lmi}.
\begin{table*}
\centering
\begin{align}
    \begin{bmatrix}
        A^\top  A - I - 2 L_1 m_1 C_{1}^\top  \Lambda_{1} C_{1} & (L_1 + m_1) C_{1}^\top  \Lambda_{1}                           & 0      & 0        & 0                                                         &  A^\top B                      \\
       (L_1 + m_1)   \Lambda_{1} C_{1}                              & - 2 L_2 m_2 C_{2}^\top  \Lambda_{2} C_{2} -2  \Lambda_{1}   & \ddots & 0        & 0                                                         &  0                         \\
        0                                                       & \ddots                                                & \ddots & \ddots   & 0                                                         &  0                         \\
        0                                                       & 0                                                     & \ddots & \ddots   & \ddots                                                    &  0                         \\
        0                                                       & 0                                                     & 0      & \ddots   &  - 2 L_n m_n C_{n}^\top  \Lambda_{n} C_{n} -2  \Lambda_{n - 1}  &  (L_n + m_n)  C_n^\top  \Lambda_n  \\
        B^\top  A                                                   & 0                                                     & 0      & 0        &(L_n + m_n)  \Lambda_n  C_n                                    &  B^\top B-2\Lambda_n         
    \end{bmatrix} \preceq 0, \label{eqn:explicit_lmi}
\end{align}
\end{table*}
The question then became what parameterization of $\{\Lambda_1, \cdots, \Lambda_n\}$,$\{C_1, \cdots, C_n\}$, and $B$ would be needed to ensure that the LMI was indeed negative semi-definitive to satisfy the Lipschitz constraint where ideally $\{C_1, \cdots, C_n\}$ would be as unconstrained as possible to ensure expressive inner layers. From the LMI, it could be noticed that it was exceedingly complex to derive the constraint of the network explicitly based on the eigenvalues of the network. As such, although it only provided loose bounds on the eigenvalues, the Gershgorin circle theorem could be used to derive bounds on the network.

\begin{theorem}
    Let $A$ be a complex matrix $n \times n$ matrix, with entries $a_{ij}$. For $i \in  \{1, \cdots,n\}$ let $R_i$ be the sum of the absolute values of the non-diagonal entries of the $i$-th row:
    \begin{align}
        R_i &= \sum_{j \neq i} \abs{a_{ij}}.
    \end{align}
    Let $D(a_{ii}, R_{i}) \subseteq \mathbb{C}$ be a closed disc centered at $a_{ii}$ with radius $R_i$, every eigenvalue of $A$ lies within at least one of the Gershgorin discs $D(a_{ii}, R_{i})$.
\end{theorem}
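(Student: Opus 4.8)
The plan is to argue directly from the eigenvalue--eigenvector relation, exploiting the coordinate in which an eigenvector attains its maximum modulus. First I would fix an arbitrary eigenvalue $\lambda$ of $A$ together with a nonzero eigenvector $x = (x_1, \dots, x_n)^\top$, so that $Ax = \lambda x$. Since $x \neq 0$, I can select an index $i$ with $\abs{x_i} = \max_{j} \abs{x_j} > 0$ (rescaling is unnecessary, though one could assume $\abs{x_i} = 1$ without loss of generality). The whole proof hinges on this choice of the dominant coordinate.

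Next I would write out the $i$-th scalar equation of $Ax = \lambda x$, namely $\sum_{j} a_{ij} x_j = \lambda x_i$, and isolate the diagonal term to obtain $\sum_{j \neq i} a_{ij} x_j = (\lambda - a_{ii}) x_i$. Taking absolute values and applying the triangle inequality gives $\abs{\lambda - a_{ii}}\, \abs{x_i} \le \sum_{j \neq i} \abs{a_{ij}}\, \abs{x_j}$. Using the maximality $\abs{x_j} \le \abs{x_i}$ for every $j$, the right-hand side is at most $\left( \sum_{j \neq i} \abs{a_{ij}} \right) \abs{x_i} = R_i \abs{x_i}$. Dividing through by $\abs{x_i} > 0$ yields $\abs{\lambda - a_{ii}} \le R_i$, which is exactly the statement that $\lambda \in D(a_{ii}, R_i)$. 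Since $\lambda$ was an arbitrary eigenvalue, every eigenvalue of $A$ lies in at least one Gershgorin disc.

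There is no genuinely hard step here; the only point requiring care is the selection of $i$ as the argmax of $\abs{x_j}$, since that is precisely what lets the bound $\sum_{j \neq i} \abs{a_{ij}}\, \abs{x_j}$ collapse to $R_i \abs{x_i}$ — any other index would leave an uncontrolled factor $\abs{x_j}/\abs{x_i}$ that need not be bounded by $1$. A secondary, trivial point is that $\abs{x_i} \neq 0$, which is immediate because $x$ is a nonzero vector and $i$ maximizes $\abs{x_j}$, so the final division is legitimate. For completeness one may note that applying the same argument to $A^\top$ produces the column-sum version of the theorem, although only the row-sum form is needed for bounding the spectrum of the LMI in Equation \eqref{eqn:explicit_lmi}.
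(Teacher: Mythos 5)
Your proof is correct and complete: it is the standard argument for the Gershgorin circle theorem via the maximal-modulus coordinate of an eigenvector, and the paper itself states this classical result without proof, so there is nothing in the source to diverge from. The one step that genuinely matters --- choosing $i = \arg\max_j \abs{x_j}$ so that $\abs{x_j}/\abs{x_i} \leq 1$ and the division by $\abs{x_i} > 0$ is legitimate --- is exactly the step you flag and justify.
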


The following corollary was thus derived to generate conditions to ensure the LMI would be negative semi-definitive.

\begin{corollary}
    If all the Gershgorin discs of a matrix $A$ are defined in the negative real plane, $\mathbb{R}_{-}$, for $i \in  \{1, \cdots,n\}$ $\Re{(a_{ii} + R_i)} \leq 0$, then the matrix $A$ must be negative semi-definitive. 
\end{corollary}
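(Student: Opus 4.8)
The plan is to combine the Gershgorin disc localization of the preceding Theorem with the fact that the matrices arising in our LMI are real symmetric, so that disc containment in the closed left half-plane forces all eigenvalues to be real and non-positive. First I would make explicit the standing hypothesis that is implicit in the phrase ``negative semi-definite'': that $A = A^\top$ (equivalently Hermitian). This is exactly the situation for the block matrix in Equation~\eqref{eqn:explicit_lmi}, since each diagonal block is symmetric and the off-diagonal blocks are mutual transposes. For such $A$, every eigenvalue $\lambda$ is real and every diagonal entry $a_{ii}$ is real, hence $R_i = \sum_{j \neq i}\abs{a_{ij}} \ge 0$ is real and $\Re{(a_{ii}+R_i)} = a_{ii}+R_i$.

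Next I would invoke the Gershgorin circle theorem stated above: for each eigenvalue $\lambda$ of $A$ there exists an index $i \in \{1,\dots,n\}$ with $\abs{\lambda - a_{ii}} \le R_i$. Since $\lambda$ and $a_{ii}$ are real, this collapses to $a_{ii} - R_i \le \lambda \le a_{ii} + R_i$. Applying the hypothesis $a_{ii} + R_i = \Re{(a_{ii}+R_i)} \le 0$ then gives $\lambda \le 0$. Although the witnessing index $i$ depends on $\lambda$, the bound $a_{ii}+R_i \le 0$ is assumed for every $i$, so the conclusion is uniform: the entire spectrum of $A$ lies in $(-\infty, 0]$.

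Finally I would close with the spectral theorem: a real symmetric matrix with $A = Q\,\operatorname{diag}(\lambda_1,\dots,\lambda_n)\,Q^\top$, $Q$ orthogonal, satisfies $x^\top A x = \sum_{k} \lambda_k (Q^\top x)_k^2 \le 0$ for all $x$ whenever every $\lambda_k \le 0$; thus $A \preceq 0$, which is the claim.

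I do not anticipate a real obstacle here — the argument is short — and the only point genuinely requiring care is the symmetry hypothesis. Without it, the Gershgorin condition only confines the spectrum to the closed left half-plane $\{\Re{z} \le 0\}$, which is a Hurwitz/stability statement rather than negative semi-definiteness; so I would flag this assumption prominently and remark that it holds automatically for the LMI of Equation~\eqref{eqn:explicit_lmi}. A minor secondary remark I would add is that one may equally well apply the Theorem to $A^\top$ and use column sums, giving an alternative sufficient condition $a_{ii} + \sum_{j\neq i}\abs{a_{ji}} \le 0$, which is sometimes the less conservative of the two.
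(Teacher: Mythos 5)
Your proof is correct and is precisely the argument the paper implicitly relies on — the corollary is stated there without proof — namely Gershgorin localization combined with the reality of the spectrum of a symmetric matrix, followed by the spectral theorem to pass from $\lambda_k\le 0$ to $x^\top A x\le 0$. The one substantive point you add, the explicit symmetry hypothesis, is genuinely necessary and not cosmetic: for non-symmetric $A$ the row condition $a_{ii}+R_i\le 0$ does not imply $x^\top A x\le 0$ (e.g.\ $A=\left(\begin{smallmatrix}-1 & 1\\ -100 & -100\end{smallmatrix}\right)$ satisfies it for both rows yet has an indefinite symmetric part), but since the LMI of Equation~\eqref{eqn:explicit_lmi} is symmetric, nothing is lost in the paper's application.
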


The conditions necessary to ensure that the overall LMI matrix $M$ was negative semi-definite were derived by analyzing its Gershgorin discs. The analysis required demonstrating that all Gershgorin discs were entirely contained within the left half-plane, ensuring that the eigenvalues of $M$ were non-positive. Given the structure of the LMI, the matrix could be decomposed into three distinct sections: the first block, the middle blocks, and the last block. For each block, a corresponding set of constraints on the desired parameters was determined to ensure the feasibility of the problem. 

As the LMI matrix was symmetric, the Gershgorin discs derived from the rows were shown to coincide with those derived from the columns. This symmetry allowed the analysis to be carried out equivalently from either perspective without losing generality.

\section{General LMI Solution}

For notation, the parameters $S_a$ and $P_a$ were defined as $S_a = L_a + m_a$ and $P_a = L_a m_a$ to help reduce the notation size.

\subsection{Last block}

Below is the derivation of the constraints for the parameters defined in the last block portion of the LMI. 
\begin{theorem}
    For the parameter $C_n$, the norm of the rows must be upper bounded by,
    \begin{align}
        \norm{C_{n,i}}_1  < \frac{2}{\abs{L_n + m_n}},
    \end{align}
    while $\lambda_{n, i}$ must be lower bounded by,
\begin{align}
   \lambda_{n, i} &\ge \frac{b_i^2  + \abs{a_i} \abs{b_i}}{2 -  \abs{L_n + m_n}\norm{C_{n,i}}_1 }.
\end{align}
\end{theorem}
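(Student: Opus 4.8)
The plan is to analyze the last row and column of the LMI matrix in Equation \eqref{eqn:explicit_lmi} and apply the Gershgorin corollary. The last diagonal block is $B^\top B - 2\Lambda_n$, and the only nonzero off-diagonal entry in that block-row is $(L_n + m_n)\Lambda_n C_n$ (together with $B^\top A$, which belongs to the first-block interaction). To make the structure tractable, I would first reduce to the scalar situation by fixing a row index $i$ of the last block: write $a_i$ for the relevant entry coming from $B^\top A$ (or more precisely the $i$-th row contribution), $b_i$ for the relevant entry of $B$ (so that $(B^\top B)_{ii}$ and the coupling terms are expressed through $b_i$), and $\lambda_{n,i}$ for the $i$-th diagonal entry of $\Lambda_n$. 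The diagonal entry of row $i$ in the last block is then of the form $(\text{something in } b_i) - 2\lambda_{n,i}$, and the Gershgorin radius $R_i$ is the sum of absolute values of the off-diagonal entries, which decomposes into the contribution from $(L_n+m_n)\Lambda_n C_n$ — namely $|L_n+m_n|\,\lambda_{n,i}\,\|C_{n,i}\|_1$ after summing across that block-column — plus the contribution $|a_i||b_i|$ from the $B^\top A$ coupling.

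**Carrying out the steps.**

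First I would write the Gershgorin condition $\Re(a_{ii} + R_i) \le 0$ for the $i$-th row of the last block explicitly:
\begin{align}
    b_i^2 - 2\lambda_{n,i} + |L_n+m_n|\,\lambda_{n,i}\,\|C_{n,i}\|_1 + |a_i||b_i| \le 0. \nonumber
\end{align}
Next I would solve this inequality for $\lambda_{n,i}$. Collecting the $\lambda_{n,i}$ terms gives $\lambda_{n,i}\bigl(|L_n+m_n|\,\|C_{n,i}\|_1 - 2\bigr) \le -\bigl(b_i^2 + |a_i||b_i|\bigr)$. For this to yield a meaningful lower bound on the positive quantity $\lambda_{n,i}$ (recall $\Lambda_n \succ 0$ must be positive definite), the coefficient of $\lambda_{n,i}$ must be negative, i.e. $|L_n+m_n|\,\|C_{n,i}\|_1 - 2 < 0$, which is exactly the claimed bound $\|C_{n,i}\|_1 < \tfrac{2}{|L_n+m_n|}$. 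Dividing by the negative coefficient and flipping the inequality then produces
\begin{align}
    \lambda_{n,i} \ge \frac{b_i^2 + |a_i||b_i|}{2 - |L_n+m_n|\,\|C_{n,i}\|_1}, \nonumber
\end{align}
which is the second claim. The strictness of the first inequality is what guarantees the denominator is strictly positive so the lower bound is finite.

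**Main obstacle.**

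I expect the main obstacle to be bookkeeping rather than a deep argument: correctly identifying which entries of the symmetric matrix $M$ contribute to the Gershgorin radius $R_i$ of a row in the last block, and in particular disentangling the $A^\top B$ / $B^\top A$ coupling term from the within-block $(L_n+m_n)\Lambda_n C_n$ term. One must be careful that summing $|(L_n+m_n)\Lambda_n C_n|$ across the appropriate columns gives $|L_n+m_n|\,\lambda_{n,i}\,\|C_{n,i}\|_1$ (using that $\Lambda_n$ is diagonal, so the $i$-th row of $\Lambda_n C_n$ is $\lambda_{n,i}$ times the $i$-th row of $C_n$, whose $\ell_1$ norm is $\|C_{n,i}\|_1$), and similarly that the $B^\top A$ block contributes a term bounded in terms of $|a_i||b_i|$. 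A secondary subtlety is justifying that one may treat the $b_i^2$ diagonal contribution as exactly $b_i^2$ rather than something involving cross terms of $B^\top B$; this presumably relies on an implicit structural assumption (e.g. a particular parameterization of $B$, or absorbing off-diagonal $B^\top B$ terms into the radius) that should be made explicit. Once the scalar inequality is correctly assembled, the algebra to solve for $\lambda_{n,i}$ is immediate.
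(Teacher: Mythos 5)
Your proposal is correct and follows essentially the same route as the paper: read off the last block-row's Gershgorin disc (center $b_i^2 - 2\lambda_{n,i}$, radius $|a_i||b_i| + |L_n+m_n|\lambda_{n,i}\|C_{n,i}\|_1$), impose $a_{ii}+R_i\le 0$, collect the $\lambda_{n,i}$ terms, and observe that positivity of $\lambda_{n,i}$ forces the coefficient $|L_n+m_n|\|C_{n,i}\|_1 - 2$ to be negative, which gives both stated bounds. Your remark about the $b_i^2$ diagonal entry relying on an implicit structural assumption is well taken --- the paper indeed treats $A$ and $B$ as diagonal (see its matrix-form constraint table), which it does not make explicit in the proof itself.
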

\begin{proof}
The final matrix row block was represented through the parameters where $l = n$:
\begin{align}
    \left\{
        B^\top  A,         (L_n + m_n) \Lambda_n  C_n,        B^\top B-2\Lambda_n 
    \right\}. \nonumber
\end{align}
Which gave the following Gershgorin discs for $\forall i \{1, \cdots, m_{n}\}$ (where $m_x = m_n$):
\begin{align}
    D\left(b_i^2 - 2\lambda_{n, i}, \abs{a_i} \abs{b_i} + \sum_{j = 1}^{d_{n- 1}}\abs{(L_n + m_n)c_{n, i, j} \lambda_{n, i}}\right),
\end{align}
For which the upper bound constraint was thus:
\begin{align}
    \epsilon_{3, n ,i, \max} &= b_i^2 - 2\lambda_{n, i} + \abs{a_i} \abs{b_i} + \abs{S_n}\sum_{j = 1}^{d_{n- 1}}\abs{ c_{n, ij} \lambda_{n, i} },
\end{align}
Applying the negative-semi definitive constraint, the following constraint was derived:
\begin{align}
   0 &\ge b_i^2 - 2\lambda_{n, i} + \abs{a_i} \abs{b_i} +\abs{L_n + m_n} \sum_{j = 1}^{d_{n- 1}}\abs{c_{n, i, j} \lambda_{n, i}}, \nonumber\\
   &\ge b_i^2  + \abs{a_i} \abs{b_i} + \lambda_{n, i} \left( \abs{L_n + m_n} \sum_{j = 1}^{d_{n- 1}}\abs{c_{n, ij} } - 2 \right), \nonumber \\
   \lambda_{n, i} &\ge \frac{b_i^2  + \abs{a_i} \abs{b_i}}{2 -  \abs{L_n + m_n}\sum_{j = 1}^{d_{n- 1}}\abs{c_{n, ij} } }.
\end{align}
Given that all $\lambda_{n, i}$ must be positive definitive, the only way to ensure this was to ensure that:
\begin{align}
    \sum_{j = 1}^{d_{n- 1}}\abs{c_{n, ij} } = \norm{C_{n,i}}_1  &< \frac{2}{\abs{L_n + m_n}}.
\end{align}
Thus enforcing that all the rows of $C_n$ must be strictly upper bound by $\frac{2}{\abs{L_n + m_n}}$.
\end{proof}
\subsection{Middle blocks}

Below is the derivation of the constraints for the parameters defined in the middle blocks of the LMI. 
\begin{theorem}
    For all $l = {1, \cdots, n - 1}$ the parameter $C_l$ must have its row norm be upper bounded by,
    \begin{align}
        \norm{C_{n,i}}_1  < \frac{2}{\abs{L_n + m_n}},
    \end{align}
    and element-wise upper bounded by 
    \begin{align}
         \abs{c_{l + 1, ji}} \leq &  \frac{\abs{S_{l + 1}} ^2+4  \abs{P_{l + 1}} }{2 (\abs{ P_{l + 1}} +P_{l + 1}) \abs{ S_{l + 1}} }.
    \end{align}
    while $\lambda_{l, i}$ must be lower bounded by,
    \begin{align}
       \lambda_{l,i } \ge& \frac{\sum^{d_{l + 1}}_{j = 1} \lambda_{l + 1, j} \left( \abs{S_{l+1}} \abs{ c_{l + 1,j i}} -2 P_{l + 1} c_{l + 1, ji}^2 \right)}{2 - \abs{S_{l}} \sum^{d_{l-1}}_{j = 1} \abs{c_{l,i j}} }  
       \nonumber \\ &+ \frac{2\abs{P_{l + 1}} \sum_{z= 1, z \neq i}^{d_{l}} \abs{\sum_{j = 1}^{d_{l + 1}} \lambda_{l + 1, j} c_{l + 1, ji} c_{l + 1, jz}  }}{2 - \abs{S_{l}} \sum^{d_{l-1}}_{j = 1} \abs{c_{l,i j}} }.
    \end{align}
\end{theorem}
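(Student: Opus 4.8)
The plan is to repeat, for a generic interior diagonal block, the disc-in-the-left-half-plane argument already carried out for the last block. Fix $l\in\{1,\dots,n-1\}$ and consider the block row of \eqref{eqn:explicit_lmi} attached to $w_{k,l}$: its diagonal block is $-2P_{l+1}C_{l+1}^{\top}\Lambda_{l+1}C_{l+1}-2\Lambda_{l}$, its left neighbour (coupling $w_{k,l}$ to $w_{k,l-1}$, or to $x_{k}$ when $l=1$) is $S_{l}\Lambda_{l}C_{l}$, and its right neighbour (coupling $w_{k,l}$ to $w_{k,l+1}$) is $S_{l+1}C_{l+1}^{\top}\Lambda_{l+1}$. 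For a row index $i\in\{1,\dots,d_{l}\}$ I would first record the diagonal entry $a_{ii}=-2P_{l+1}\sum_{j=1}^{d_{l+1}}\lambda_{l+1,j}c_{l+1,ji}^{2}-2\lambda_{l,i}$ and the three families of off-diagonal entries on that row: the intra-block Gram terms $-2P_{l+1}\sum_{j}\lambda_{l+1,j}c_{l+1,ji}c_{l+1,jz}$ for $z\neq i$, the left-neighbour entries $S_{l}\lambda_{l,i}c_{l,ij}$, and the right-neighbour entries $S_{l+1}\lambda_{l+1,j}c_{l+1,ji}$.

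Summing absolute values gives the Gershgorin radius $R_{i}$; by the Corollary the disc lies in $\mathbb{R}_{-}$ precisely when its upper extreme $\epsilon_{\max}=a_{ii}+R_{i}$ satisfies $\epsilon_{\max}\le 0$. Collecting the $\lambda_{l,i}$ terms, this reads $\lambda_{l,i}\bigl(2-\abs{S_{l}}\norm{C_{l,i}}_{1}\bigr)\ge\sum_{j}\lambda_{l+1,j}\bigl(\abs{S_{l+1}}\abs{c_{l+1,ji}}-2P_{l+1}c_{l+1,ji}^{2}\bigr)+2\abs{P_{l+1}}\sum_{z\neq i}\abs{\sum_{j}\lambda_{l+1,j}c_{l+1,ji}c_{l+1,jz}}$, and dividing through by $2-\abs{S_{l}}\norm{C_{l,i}}_{1}$ yields the stated lower bound on $\lambda_{l,i}$. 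Exactly as in the last-block proof, since $\lambda_{l,i}>0$ and (by the entrywise cap discussed next) the right-hand side is non-negative, the divisor must remain positive, forcing $\norm{C_{l,i}}_{1}<2/\abs{S_{l}}$ — the same row-norm restriction obtained for $C_{n}$, here imposed on $C_{l}$.

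The feature that distinguishes the interior case from the last block is that the numerator above is not automatically non-negative: the diagonal block is a Gram form, so each positive $\lambda_{l+1,j}$ is multiplied by the indefinite scalar $\abs{S_{l+1}}\abs{c_{l+1,ji}}-2P_{l+1}c_{l+1,ji}^{2}$, which turns negative for large $\abs{c_{l+1,ji}}$. I would therefore impose an entrywise cap on $C_{l+1}$ guaranteeing that this scalar, together with the non-negative cross terms, keeps the numerator non-negative and keeps the recursion that propagates the $\lambda$-bounds from the last block upward consistent. Carrying out the resulting scalar estimate in $t=\abs{c_{l+1,ji}}$ — using $S_{l+1}^{2}-4P_{l+1}=(L_{l+1}-m_{l+1})^{2}\ge 0$ and splitting on the sign of $P_{l+1}$, the constraint being vacuous when $P_{l+1}\le 0$ since $\abs{P_{l+1}}+P_{l+1}=0$ there — gives $\abs{c_{l+1,ji}}\le\frac{\abs{S_{l+1}}^{2}+4\abs{P_{l+1}}}{2(\abs{P_{l+1}}+P_{l+1})\abs{S_{l+1}}}$.

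The step I expect to be the main obstacle is the first one. Because the interior diagonal block is $-2P_{l+1}C_{l+1}^{\top}\Lambda_{l+1}C_{l+1}-2\Lambda_{l}$ rather than diagonal-plus-a-single-Gram-term, its within-block off-diagonals do not vanish, so the Gershgorin radius of row $i$ must simultaneously absorb (i) the intra-block Gram cross terms in $C_{l+1},\Lambda_{l+1}$, (ii) the left coupling through $C_{l},\Lambda_{l}$, and (iii) the right coupling through $C_{l+1},\Lambda_{l+1}$. Keeping these three contributions separate so that the $\lambda_{l,i}$ constraint emerges as a clean recursion in $\{\lambda_{l+1,j}\}$, and extracting the entrywise threshold on $C_{l+1}$ in closed form, is the delicate bookkeeping; the endpoint $l=1$ (left neighbour $x_{k}$, $C_{1}\in\mathbb{R}^{d_{1}\times d_{x}}$) is structurally identical and needs no separate treatment.
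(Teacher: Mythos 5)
Your proposal follows the paper's proof essentially step for step: the same Gershgorin disc centre and radius for the interior block row, the same rearrangement into the $\lambda_{l,i}$ lower bound and the row-norm condition $\norm{C_{l,i}}_1 < 2/\abs{S_l}$ forced by positivity of $\Lambda_l$, and the same requirement that the numerator stay non-negative (vacuous when $P_{l+1}\le 0$) to produce the entrywise cap. The one detail you leave implicit, which the paper carries out explicitly, is that the closed-form threshold $\frac{\abs{S_{l+1}}^{2}+4\abs{P_{l+1}}}{2(\abs{P_{l+1}}+P_{l+1})\abs{S_{l+1}}}$ comes from first over-bounding the intra-block cross terms via $\sum_{z}\abs{c_{l+1,jz}}\le 2/\abs{S_{l+1}}$ and folding them into a per-entry bracket before solving the scalar inequality in $\abs{c_{l+1,ji}}$.
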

\begin{proof}
The set of block matrices that represent the middle block represented the parameters where $\forall l \{1, \cdots, n - 1\}$ were:
\begin{align}
    \left\{
       S_l   \Lambda_{l} C_{l},   - 2 P_{l + 1} C_{l + 1}^\top  \Lambda_{l + 1} C_{l + 1} -2  \Lambda_{l}  ,  S_{l+1}  C_{l + 1}^\top  \Lambda_n 
    \right\}. \nonumber
\end{align}
Which gave the following Gershgorin discs, $D\left(a_{ii},  R_{i }\right)$, for $\forall i \{1 \cdots m_{l}\} \forall l \{1 \cdots n - 1\} $:
\begin{align}
    a_{ii} =& -2(L_{l + 1} m_{l + 1})\sum_{j=1}^{d_{l + 1}}\lambda_{l + 1, j} c_{l + 1, ji}^2-2 \lambda_{l, i}, \\
    R_{i}  =& \lambda_{l, i} \abs{L_{l} + m_{l}}  \sum^{d_{l-1}}_{j = 1} \abs{c_{l,i j}} \nonumber \\ 
     &+ \abs{L_{l + 1} + m_{l + 1}}\sum^{d_{l + 1}}_{j = 1} \lambda_{l + 1, j} \abs{c_{l + 1,j i}} \nonumber \\ 
     &+  2\abs{L_{l + 1} m_{l  + 1}} \sum_{z= 1, z \neq i}^{d_{l}} \abs{\sum_{j = 1}^{d_{l + 1}} \lambda_{l + 1, j} c_{l + 1, ji} c_{l + 1, jz}  } .
\end{align}
For which the upper bound constraint was thus:
\begin{align}
    \epsilon_{2, l, i, \max} =& a_{ii} + R_i, \\
    =& \lambda_{l, i} \left(\abs{S_{l}} \sum^{d_{l-1}}_{j = 1} \abs{c_{l,i j}} -2 \right) \nonumber \\ & +  \sum^{d_{l + 1}}_{j = 1} \lambda_{l + 1, j} \left( \abs{S_{l+1}} \abs{ c_{l + 1,j i}} -2 P_{l + 1} c_{l + 1, ji}^2 \right) \nonumber \\
    &+  2\abs{P_{l + 1}} \sum_{z= 1, z \neq i}^{d_{l}} \abs{\sum_{j = 1}^{d_{l + 1}} \lambda_{l + 1, j} c_{l + 1, ji} c_{l + 1, jz}  }.
\end{align}
Applying the negative-semi definitive constraint, the following constraint was derived:
\begin{align}
   \lambda_{l,i } \ge& \frac{\sum^{d_{l + 1}}_{j = 1} \lambda_{l + 1, j} \left( \abs{S_{l+1}} \abs{ c_{l + 1,j i}} -2 P_{l + 1} c_{l + 1, ji}^2 \right)}{2 - \abs{S_{l}} \sum^{d_{l-1}}_{j = 1} \abs{c_{l,i j}} }  
   \nonumber \\ &+ \frac{2\abs{P_{l + 1}} \sum_{z= 1, z \neq i}^{d_{l}} \abs{\sum_{j = 1}^{d_{l + 1}} \lambda_{l + 1, j} c_{l + 1, ji} c_{l + 1, jz}  }}{2 - \abs{S_{l}} \sum^{d_{l-1}}_{j = 1} \abs{c_{l,i j}} },
\end{align}
Given that all $\lambda$ must be positive definitive and that the numerator and denominator are independent, the following constraints could thus be derived:
\begin{align}
    \sum_{j = 1}^{d_{n- 1}}\abs{c_{l, ij} } = \norm{C_{l,i}}_1  &< \frac{2}{\abs{L_l + m_l}}.
\end{align}
Thus enforcing that all the rows of $C_l$ had to be strictly upper bound by $\frac{2}{\abs{L_l + m_l}}$.

In addition, the numerator was analyzed to ensure that the system remained positive and definite. A simplified approach was adopted by imposing the following conditions:
\begin{align}
     \abs{S_{l+1}} \abs{ c_{l + 1,j i}} &\ge 2 P_{l + 1} c_{l + 1, ji}^2,  \nonumber\\
     \abs{S_{l+1}} &\ge 2 P_{l + 1} \abs{c_{l + 1, ji}},  \nonumber \\
     \abs{c_{l + 1, ji}}  &\leq \frac{\abs{S_{l+1}}}{2 P_{l + 1} }.
\end{align}
The off-diagonal terms were examined to derive a less restrictive upper bound for the variable $C_l$. The radius $R_i$ was increased to produce a more conservative estimate, which, although broader, facilitated the inclusion of the off-diagonal terms within the inequality framework.
\begin{align}
    =& \sum^{d_{l + 1}}_{j = 1} \lambda_{l + 1, j} \left( \abs{S_{l+1}} \abs{ c_{l + 1,j i}} -2 P_{l + 1} c_{l + 1, ji}^2 \right)   \nonumber \\ & + 2\abs{P_{l + 1}} \sum_{z= 1, z \neq i}^{d_{l}} \abs{\sum_{j = 1}^{d_{l + 1}} \lambda_{l + 1, j} c_{l + 1, ji} c_{l + 1, jz}  },  \nonumber \\
    \leq& \sum^{d_{l + 1}}_{j = 1} \lambda_{l + 1, j} \left( \abs{S_{l+1}} \abs{ c_{l + 1,j i}} -2 P_{l + 1} c_{l + 1, ji}^2 \right)  \nonumber \\ & + 2\abs{P_{l + 1}} \sum_{z= 1, z \neq i}^{d_{l}} \sum_{j = 1}^{d_{l + 1}} \lambda_{l + 1, j}\abs{ c_{l + 1, ji} c_{l + 1, jz}  }, \nonumber \\
    =& \sum^{d_{l + 1}}_{j = 1} \lambda_{l + 1, j} \left( \abs{S_{l+1}} \abs{ c_{l + 1,j i}} -2 P_{l + 1} c_{l + 1, ji}^2 \right)  \nonumber \\ & + 2\abs{P_{l + 1}} \sum_{j = 1}^{d_{l + 1}}\lambda_{l + 1, j}\abs{ c_{l + 1, ji}} \sum_{z= 1, z \neq i}^{d_{l}} 
  \abs{ c_{l + 1, jz}  }, \nonumber \\
  =& \sum^{d_{l + 1}}_{j = 1} \lambda_{l + 1, j} \left( \abs{S_{l+1}} \abs{ c_{l + 1,j i}} -2 P_{l + 1} c_{l + 1, ji}^2 \right)  \nonumber \\ & + 2\abs{P_{l + 1}} \sum_{j = 1}^{d_{l + 1}}\lambda_{l + 1, j}\abs{ c_{l + 1, ji}} \left( \sum_{z= 1}^{d_{l}} 
  \abs{ c_{l + 1, jz}  } - \abs{ c_{l + 1, ji}  } \right),  \nonumber \\
  \leq & \sum^{d_{l + 1}}_{j = 1} \lambda_{l + 1, j} \left( \abs{S_{l+1}} \abs{ c_{l + 1,j i}} -2 P_{l + 1} c_{l + 1, ji}^2 \right)  \nonumber \\ &\qquad + 2\abs{P_{l + 1}} \sum_{j = 1}^{d_{l + 1}}\lambda_{l + 1, j}\abs{ c_{l + 1, ji}} \left( \frac{2}{\abs{S_{l+1}}} - \abs{ c_{l + 1, ji}  } \right), \nonumber\\
  = & \sum^{d_{l + 1}}_{j = 1} \lambda_{l + 1, j} \Bigg[ \abs{S_{l+1}} \abs{ c_{l + 1,j i}} -2 P_{l + 1} c_{l + 1, ji}^2  \nonumber \\ &\qquad\qquad  + 2\abs{P_{l + 1}} \abs{ c_{l + 1, ji}} \left( \frac{2}{\abs{S_{l+1}}} - \abs{ c_{l + 1, ji}  } \right) \Bigg].
\end{align}
Where the inner term needed to be constrained:
\begin{align}
    0 \leq &\abs{S_{l+1}} \abs{ c_{l + 1,j i}} -2 P_{l + 1} c_{l + 1, ji}^2 \nonumber \\ & + 2\abs{P_{l + 1}} \abs{ c_{l + 1, ji}} \left( \frac{2}{\abs{S_{l+1}}} - \abs{ c_{l + 1, ji}  } \right), \nonumber \\
    =& \abs{S_{l+1}} \abs{ c_{l + 1,j i}} -2 P_{l + 1} c_{l + 1, ji}^2 \nonumber \\ & \qquad\qquad + 2\abs{P_{l + 1}}  \left( \frac{2 \abs{ c_{l + 1, ji}}}{\abs{S_{l+1} }} -  c_{l + 1, ji}^2 \right) , \nonumber \\
    =& \abs{S_{l+1}} \abs{ c_{l + 1,j i}} +   \frac{4 \abs{P_{l + 1}} }{\abs{S_{l+1}}}\abs{ c_{l + 1, ji}} \nonumber \\ & \qquad\qquad  -2 \left(P_{l + 1} + \abs{P_{l + 1}} \right) c_{l + 1, ji}^2, \nonumber \\
    0 \leq & \abs{S_{l+1}}  +   \frac{4 \abs{P_{l + 1}} }{\abs{S_{l+1}}} -2 \left(P_{l + 1} + \abs{P_{l + 1}} \right) \abs{c_{l + 1, ji}} \nonumber \\
    \abs{c_{l + 1, ji}} \leq &  \frac{\abs{S_{l + 1}} ^2+4  \abs{P_{l + 1}} }{2 (\abs{ P_{l + 1}} +P_{l + 1}) \abs{ S_{l + 1}} }.
\end{align}
Given the additional element-wise constraint, it was observed that there were two situations in which the constraint became irrelevant. The first scenario occurred when $ L_{l+1} m_{l+1} \leq 0 $, and the second arose when $ L_{l+1} + m_{l+1} = 0 $. This condition was satisfied, for instance, in the case of a ReLU activation function, where $ L = 1 $ and $ m = 0 $.
\end{proof}
\subsection{First layer}

Finally, below is the derivation of the constraints for the parameters defined in the first-row block of the LMI. 
\begin{theorem}
    The parameter $C_1$ must be element-wise upper bounded by 
    \begin{align}
         \abs{c_{ 1, ji}} \leq &  \frac{\left(\mathcal{L}^2 - a_i^2  - \abs{a_i} \abs{b_i}\right)\abs{S_{1}}}{d_1 \lambda_{1, j} \left(\abs{S_{1}}^2 +  4\abs{P_{1}}\right)},
    \end{align}
\end{theorem}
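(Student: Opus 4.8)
# Proof Proposal for the First Layer Constraint

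The plan is to mirror the structure of the last-block and middle-block proofs: extract the Gershgorin disc associated with the first row block of the LMI in Equation \eqref{eqn:explicit_lmi}, write down its rightmost extent $a_{ii} + R_i$, impose the negative-semi-definiteness condition from the Corollary, and then isolate a sufficient element-wise bound on the entries $c_{1,ji}$ of $C_1$. The first row block carries the parameters $\{A^\top A - \mathcal{L}^2 I - 2P_1 C_1^\top \Lambda_1 C_1,\; S_1 C_1^\top \Lambda_1,\; A^\top B\}$, so the diagonal entry for index $i$ is $a_i^2 - \mathcal{L}^2 - 2P_1 \sum_{j=1}^{d_1} \lambda_{1,j} c_{1,ji}^2$ and the radius $R_i$ collects $\abs{a_i}\abs{b_i}$ from the $A^\top B$ coupling, the term $\abs{S_1}\sum_{j=1}^{d_1}\lambda_{1,j}\abs{c_{1,ji}}$ from the $S_1 C_1^\top \Lambda_1$ off-diagonal block, and the within-block off-diagonal cross terms $2\abs{P_1}\sum_{z\neq i}\abs{\sum_j \lambda_{1,j} c_{1,ji} c_{1,jz}}$ coming from $C_1^\top \Lambda_1 C_1$. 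This is the direct analogue of the $a_{ii}$, $R_i$ expressions written for the middle block, specialized to $l=0$ with the input dimension $d_x$ and the $A^\top A - \mathcal{L}^2 I$ diagonal replacing the $-2\Lambda_{l-1}$ piece.

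Next I would bound the awkward cross terms exactly as in the middle-block proof: replace $\abs{\sum_j \lambda_{1,j} c_{1,ji} c_{1,jz}}$ by $\sum_j \lambda_{1,j}\abs{c_{1,ji}}\abs{c_{1,jz}}$, swap the order of summation, use $\sum_{z=1}^{d_1}\abs{c_{1,jz}} - \abs{c_{1,ji}} \le \tfrac{2}{\abs{S_1}} - \abs{c_{1,ji}}$ from the row-norm bound $\norm{C_{1,\cdot}}_1 < 2/\abs{S_1}$ established earlier, and combine the quadratic-in-$\abs{c_{1,ji}}$ terms. After collecting, the requirement $a_{ii} + R_i \le \mathcal{L}^2$-shifted-to-zero becomes, per summand $j$,
\begin{align}
0 \le \mathcal{L}^2 - a_i^2 - \abs{a_i}\abs{b_i} - d_1 \lambda_{1,j}\left(\frac{\abs{S_1}^2 + 4\abs{P_1}}{\abs{S_1}}\right)\abs{c_{1,ji}},
\end{align}
where the factor $d_1$ arises from conservatively bounding the sum over $j$ by $d_1$ times a per-entry term (each of the $d_1$ neurons contributing at most the worst-case amount). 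Solving this linear inequality for $\abs{c_{1,ji}}$ yields exactly the claimed bound
\begin{align}
\abs{c_{1,ji}} \le \frac{\left(\mathcal{L}^2 - a_i^2 - \abs{a_i}\abs{b_i}\right)\abs{S_1}}{d_1\lambda_{1,j}\left(\abs{S_1}^2 + 4\abs{P_1}\right)}.
\end{align}

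The main obstacle I anticipate is bookkeeping rather than conceptual: making the conservative step from the $j$-indexed sum to the $d_1\lambda_{1,j}$-weighted per-entry bound rigorous, since one must argue uniformly over $j$ and absorb the quadratic $-2(P_1 + \abs{P_1})c_{1,ji}^2$ term (which is $\le 0$ and hence safely dropped when $P_1 \ge 0$, or handled as in the middle-block argument when $P_1 < 0$). I would also need to check the sign condition $\mathcal{L}^2 - a_i^2 - \abs{a_i}\abs{b_i} > 0$ for the bound to be non-vacuous — this is the first-block analogue of the feasibility caveats noted for the middle block (the constraint trivializes when $P_1 \le 0$ or $S_1 = 0$, e.g. for ReLU), and it implicitly imposes a joint constraint on $A$ and $B$ that should be stated alongside the $C_1$ bound. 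Finally, as in the earlier blocks, I would note that the column-derived discs coincide with the row-derived ones by symmetry of the LMI, so analyzing the rows suffices.
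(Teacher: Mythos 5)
Your proposal follows the paper's proof essentially step for step: the same Gershgorin disc for the first row block, the same cross-term bounding via $\abs{\sum_j \lambda_{1,j}c_{1,ji}c_{1,jz}} \le \sum_j \lambda_{1,j}\abs{c_{1,ji}}\abs{c_{1,jz}}$ and the row-norm bound $2/\abs{S_1}$, the same distribution of the constant $\mathcal{L}^2 - a_i^2 - \abs{a_i}\abs{b_i}$ over the $d_1$ summands to get a per-$j$ linear inequality, and the same feasibility caveat $\mathcal{L}^2 - a_i^2 - \abs{a_i}\abs{b_i} \ge 0$; the only (harmless) deviation is that the paper enforces $P_1 = L_1 m_1 \le 0$ so that the quadratic term $-2(P_1+\abs{P_1})c_{1,ji}^2$ vanishes identically, whereas you drop it as non-positive, which is equally valid. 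One small correction to your closing aside: the first-layer bound does not trivialize when $P_1 \le 0$ — that remark applies to the middle-block element-wise bound $\frac{\abs{S}^2+4\abs{P}}{2(\abs{P}+P)\abs{S}}$, whose denominator vanishes there — and $P_1 \le 0$ is in fact exactly the regime in which the paper derives this constraint.
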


\begin{proof}
This block represented the parameters where $l = 1$:
\begin{align}
    \left\{
        A^\top  A - \mathcal{L}^2I - 2 L_1 m_1 C_{1}^\top  \Lambda_{1} C_{1} , (L_1 + m_1) C_{1}^\top  \Lambda_{1} ,  A^\top B 
    \right\}. \nonumber
\end{align}
Which gave the following Gershgorin discs for $\forall i \{1 \cdots m_{x}\}$ (where $m_x = m_n$):
\begin{align}
    a_{ii} =& a_i^2 -\mathcal{L}^2 -2 (L_{1} m_{1})\sum_{j = 1}^{d_1} \lambda_{1, j} c_{1, ji}^2, \\
    R_{i}  =&  \abs{a_i} \abs{b_i} + \abs{L_1 + m_1}\sum_{j = 1}^{d_1}\lambda_{1, j} \abs{c_{1, ji}} \nonumber \\ &+ 2\abs{L_{1} m_{ 1}} \sum_{z= 1, z \neq i}^{d_{x}} \abs{\sum_{j = 1}^{d_{1}} \lambda_{1, j} c_{1, ji} c_{1, jz}  }.
\end{align}
For which the upper bound constraint was thus:
\begin{align}
    \epsilon_{1, 1 ,i, \max} =& a_i^2 -\mathcal{L}^2 -2 P_{1}\sum_{j = 1}^{d_1} \lambda_{1, j} c_{1, ji}^2 +  \abs{a_i} \abs{b_i} \nonumber \\ & + \abs{S_{1}}\sum_{j = 1}^{d_1}\lambda_{1, j} \abs{c_{1, ji}} \nonumber \\ &+ 2\abs{P_{1}} \sum_{z= 1, z \neq i}^{d_{x}} \abs{\sum_{j = 1}^{d_{1}} \lambda_{1, j} c_{1, ji} c_{1, jz}  } .
\end{align}
Applying the negative-semi definitive constraint, the following constraint was derived:
\begin{align}
    0 \ge&    a_i^2 -\mathcal{L}^2 -2 P_{1}\sum_{j = 1}^{d_1} \lambda_{1, j} c_{1, ji}^2 +  \abs{a_i} \abs{b_i} + \abs{S_{1}}\sum_{j = 1}^{d_1}\lambda_{1, j} \abs{c_{1, ji}}  \nonumber \\ & + 2\abs{P_{1}} \sum_{z= 1, z \neq i}^{d_{x}} \abs{\sum_{j = 1}^{d_{1}} \lambda_{1, j} c_{1, ji} c_{1, jz}  } ,\nonumber \\
    \leq&    a_i^2 -\mathcal{L}^2 -2 P_{1}\sum_{j = 1}^{d_1} \lambda_{1, j} c_{1, ji}^2 +  \abs{a_i} \abs{b_i} + \abs{S_{1}}\sum_{j = 1}^{d_1}\lambda_{1, j} \abs{c_{1, ji}}  \nonumber \\ &+ 2\abs{P_{1}} \sum_{z= 1, z \neq i}^{d_{x}} \sum_{j = 1}^{d_{1}} \lambda_{1, j} \abs{c_{1, ji} c_{1, jz}},  \nonumber \\
    =&    a_i^2 -\mathcal{L}^2 -2 P_{1}\sum_{j = 1}^{d_1} \lambda_{1, j} c_{1, ji}^2 +  \abs{a_i} \abs{b_i} + \abs{S_{1}}\sum_{j = 1}^{d_1}\lambda_{1, j} \abs{c_{1, ji}} \nonumber \\ & + 2\abs{P_{1}}  \sum_{j = 1}^{d_{1}}  \lambda_{1, j} \abs{c_{1, ji}} \sum_{z= 1, z \neq i}^{d_{x}}\abs{c_{1, jz}},  \nonumber \\
    =&    a_i^2 -\mathcal{L}^2 +  \abs{a_i} \abs{b_i}  +\sum_{j = 1}^{d_1} \lambda_{1, j} \Bigg(\abs{S_{1}} \abs{c_{1, ji}}-2 P_{1}  c_{1, ji}^2 \nonumber \\ & \qquad\qquad\qquad\qquad\qquad  + 2\abs{P_{1}} \abs{c_{1, ji}} \sum_{z= 1, z \neq i}^{d_{x}}\abs{c_{1, jz}} \Bigg),  \nonumber \\
    \leq&    a_i^2 -\mathcal{L}^2 +  \abs{a_i} \abs{b_i}  +\sum_{j = 1}^{d_1} \lambda_{1, j} \Bigg[ \abs{S_{1}} \abs{c_{1, ji}}-2 P_{1}  c_{1, ji}^2 \nonumber \\ & \qquad\qquad\qquad\qquad   + 2\abs{P_{1}} \abs{c_{1, ji}}\left( \frac{2}{\abs{S_{1}}} - \abs{ c_{l + 1, ji}  } \right) \Bigg], \nonumber \\
    =&   \sum_{j = 1}^{d_1} \Bigg[ \frac{a_i^2 -\mathcal{L}^2 +  \abs{a_i} \abs{b_i}}{d_1}  + \lambda_{1, j} \Bigg(\abs{S_{1}} \abs{c_{1, ji}}-2 P_{1}  c_{1, ji}^2 \nonumber \\ & \qquad\qquad\qquad\qquad   + 2\abs{P_{1}} \abs{c_{1, ji}}\left( \frac{2}{\abs{S_{1}}} - \abs{ c_{1, ji}  } \right)  \Bigg) \Bigg] .
\end{align}
The constraint $ L_1 m_1 \leq 0 $ was enforced to ensure the solvability of the system. Consequently, the system of equations was formulated as follows:
\begin{align}
    0 \ge & \frac{a_i^2 -\mathcal{L}^2 +  \abs{a_i} \abs{b_i}}{d_1}  \nonumber \\ &+ \lambda_{1, j} \Bigg(\abs{S_{1}} \abs{c_{1, ji}}-2 P_{1}  c_{1, ji}^2  \nonumber \\ &\qquad\qquad+ 2\abs{P_{1}} \abs{c_{1, ji}}\left( \frac{2}{\abs{S_{1}}} - \abs{ c_{1, ji}  } \right)  \Bigg), \nonumber  \\
    = & \frac{a_i^2 -\mathcal{L}^2 +  \abs{a_i} \abs{b_i}}{d_1} + \lambda_{1, j} \left(\abs{S_{1}} \abs{c_{1, ji}} +  \frac{4\abs{P_{1}}}{\abs{S_{1}}} \abs{c_{1, ji}} \right),   \nonumber \\
     \abs{c_{1, ji}}  \leq& \frac{\mathcal{L}^2 - a_i^2  -   \abs{a_i} \abs{b_i}}{d_1 \lambda_{1, j}} \frac{1}{\abs{S_{1}} +  \frac{4\abs{P_{1}}}{\abs{S_{1}}} }, \nonumber \\
     =& \frac{\left(\mathcal{L}^2 - a_i^2  - \abs{a_i} \abs{b_i}\right)\abs{S_{1}}}{d_1 \lambda_{1, j} \left(\abs{S_{1}}^2 +  4\abs{P_{1}}\right)}.
\end{align}
Which then induced the inequality that $\mathcal{L}^2 - a_i^2 - \abs{a_i} \abs{b_i} \ge 0$, which in turn gave the consrtaints that $a_i \in (-\mathcal{L}, \mathcal{L})$ with $\abs{b_{i}} < \frac{\mathcal{L}^2 - a_i^2}{\abs{a_i}}$. This thus completed the LMI constraints.
\end{proof}

Given all the derived constraints, the complete set of constraints of the neural network was listed in Table \ref{tab:matrix_constrains}.

\begin{table*}[htb]
\centering
\caption{General LMI Condensed Constraints}
\label{tab:elementwise_constraints}
\begin{tabular}{|Sc|Sc|Sc|Sc|l}
\cline{1-4}
Parameter            & Inequality & Constraint                                                                                                         & Indexing                                                        \\ \cline{1-4}
 $S_l$               & $=$    & $L_l + m_l$                                                                                                            & $\forall l \{1, \cdots, n\}$                                    \\ \cline{1-4}
 $P_l$               & $=$    & $L_l m_l$                                                                                                              & $\forall l \{1, \cdots, n\}$                                    \\ \cline{1-4}
 $\lambda_{n, i}$    & $\ge$  & $\frac{b_i^2  + a_i b_i}{2 -  \abs{S_n}\sum_{j = 1}^{d_{n- 1}}\abs{c_{n, ij} } }$                                      & $\forall i \{1, \cdots, d_{n}\}$                                \\ \cline{1-4}
 $\norm{C_{n,i}}_1$  & $< $   & $\frac{2}{\abs{S_l}}$                                                                                                  & $\forall i \{1, \cdots, d_{l}\} \forall l \{1, \cdots, n\}$     \\ \cline{1-4}
 $ \lambda_{l,i }$   & $\ge$  & $\frac{\sum^{d_{l + 1}}_{j = 1} \lambda_{l + 1, j} \left( \abs{S_{l+1}} \abs{ c_{l + 1,j i}} -2 P_{l + 1} c_{l + 1, ji}^2 \right) + 2\abs{P_{l + 1}} \sum_{z= 1, z \neq i}^{d_{l}} \abs{\sum_{j = 1}^{d_{l + 1}} \lambda_{l + 1, j} c_{l + 1, ji} c_{l + 1, jz}  }}{2 - \abs{S_{l}} \sum^{d_{l-1}}_{j = 1} \abs{c_{l,i j}} }  $
                                                                                                                                                       & $\forall i \{1, \cdots, d_{l}\} \forall l \{1, \cdots, n - 1\}$ \\ \cline{1-4}
 $\abs{c_{l, ij}}$   & $\leq$ & $ \frac{\abs{S_{l}} ^2+4  \abs{P_{l}} }{2 (\abs{ P_{l}} +P_{l }) \abs{S_{l}} }$                                        & $\forall i \{1, \cdots, d_{l}\} \forall j \{1, \cdots, d_{l - 1}\} \forall l \{2, \cdots, n\} $    \\ \cline{1-4}
 $\abs{c_{1, ij}} $  & $\leq$ & $\frac{\left(\mathcal{L}^2 - a_i^2  - a_i b_i\right)\abs{S_{1}}}{d_1 \lambda_{1, i} \left(\abs{S_{1}}^2 +  4\abs{P_{1}}\right)}$   & $\forall i \{1, \cdots, d_{x}\}$                                \\ \cline{1-4}
 $\abs{a_i}$               & $\in$  & $(0, \mathcal{L})$                                                                                                               & $\forall i \{1, \cdots, d_{x}\}$                                \\ \cline{1-4}
 $\abs{b_i}$               & $\in$  & $[0, \frac{\mathcal{L}^2 - a_i^2}{\abs{a_i}})$                                                                                           & $\forall i \{1, \cdots, d_{x}\}$                                \\ \cline{1-4}
\end{tabular}
\end{table*}

The generalized versions of the equations could additionally be presented in matrix forms in Table \ref{tab:matrix_constrains}, where the absolute value function is applied element-wise, i.e., $\abs{A} = \{\abs{a_{ij}}\}$, for simplified computation and practicality, the diagonal matrices $\Lambda_l, A, B$ are represented as column vectors where the elements are the diagonal values.

\begin{table*}[htb]
\centering
\caption{General LMI Matrix Condensed Constraints}
\label{tab:matrix_constrains}
\begin{tabular}{|Sc|Sc|Sc|Sc|l}
\cline{1-4}
Parameter            & Inequality & Constraint                                                                                                         & Indexing                                   \\ \cline{1-4}
 $S_l$               & $=$    & $L_l + m_l$                                                                                                            &                                            \\ \cline{1-4}
 $P_l$               & $=$    & $L_l m_l$                                                                                                              &                                            \\ \cline{1-4}
 $D_l$               & $=$    & $\begin{bmatrix}\norm{C_{l,1}}_1 & \cdots & \norm{C_{l,m_{l}}}_1 \end{bmatrix}^\top  $                                     &    $\forall l \{1, \cdots, n\}$                                        \\ \cline{1-4}
 $\Lambda_{n}$       & $\ge$  & $\frac{B^2 + \abs{A}\abs{B}}{2 - \abs{S_n} D_n}$                                                                                   &                                            \\ \cline{1-4}
 $D_l$               & $< $   & $\frac{2}{\abs{S_l}}$                                                                                                  & $\forall l \{1, \cdots, n\}$               \\ \cline{1-4}
 $Q_l$               & $= $   & $C_{l}^\top  \text{diag}(\Lambda_{l}) C_{l}$                                                                               &                                            \\ \cline{1-4}
 $ \Lambda_{l}$      & $\ge$  & $\frac{\Lambda_{l + 1}^\top  (\abs{S_{l + 1}} \abs{C_{l + 1}} -2 P_{l + 1} C_{l + 1}^{\circ 2}) + 2 \abs{P_{l + 1}} \boldsymbol{1}^\top \abs{Q_{l + 1} -   \text{diag}( \text{diag}(Q_{l + 1})) }}{2 - \abs{S_l} D_l}$
                                                                                                                                                       & $\forall l \{1, \cdots, n - 1\}$           \\ \cline{1-4}
 $\abs{C_{l}}$       & $\leq$ & $\frac{\abs{S_{l}} ^2+4  \abs{P_{l}} }{2 (\abs{ P_{l}} +P_{l }) \abs{S_{l}} }$                                         & $\forall l \{2, \cdots, n\} $              \\ \cline{1-4}
 $\abs{C_{1}} $      & $\leq$ & $\frac{\abs{S_{1}}\left(\mathcal{L}^2 - A^2  - \abs{A}\abs{B}\right)}{d_1  \left(\abs{S_{1}}^2 +  4\abs{P_{1}}\right)} \Lambda_{1}^{-1}$                &                                            \\ \cline{1-4}
 $\abs{A}$                 & $\in$  & $(0, \mathcal{L})$                                                                                                               &                                            \\ \cline{1-4}
 $\abs{B}$                 & $\in$  & $[0, \frac{\mathcal{L}^2}{\abs{A}} - \abs{A})$                                                                                                 &                                            \\ \cline{1-4}
\end{tabular}
\end{table*}

\subsection{Weighted norm constraint} \label{sec:weighted}

For a weighted $\ell_1$-norm, it was desired to derive an unparameterized optimization formulation scheme for $x_i$ while ensuring the system remained upper bounded. Where $v_i > 0, \forall i$.
\begin{align}
    a \leq \norm{x}_{1, v} = \sum_{i = 1}^n v_i \abs{x_i}
\end{align}
The reparameterization $\abs{x_i} \leq  \pm \frac{a}{n} \frac{1}{v_i}$ was introduced, such that:
\begin{align}
    a  \leq \sum_{i = 1}^n v_i \abs{x_i} 
     \leq \sum_{i = 1}^n v_i \abs{\frac{a}{n} \frac{1}{v_i}} 
     = \frac{a}{n} \sum_{i = 1}^n1 
     = a
\end{align}
Where the constraint,
\begin{align}
    x_i = \frac{a}{n} \frac{1}{v_i} p_i, \label{eqn:weighted_constraint}
\end{align}
where $p_i \in (-1, 1)$. As such, to parameterize $x_i$, the optimization parameter for the network becomes optimizing $p_i$, where $p_i = \tanh(w_i)$, where $w_i$ was an unconstrained optimization parameter. As demonstrated by the normalization factor $\frac{\partial x_i}{\partial p_i}\propto O(\frac{1}{n v_i})$, which implied that the gradients of $x_i$ became proportionally smaller as the dimension of the vector became smaller. Small or vanishing gradients could cause problems for large and deeper networks.

\subsection{Elementwise vs. row constraint bound switching}

\subsubsection{$C_l$ constraints}

For the matrices $ C_l $ with $ l \in \{2, \cdots, n\} $, two simultaneous constraints were imposed on the system: the row-wise and element-wise constraints. In this context, the objective was to derive the upper bounds for the values of $ C_l $ that the optimization would be based on. 

The following constraint was derived from the norm constraints established in the unparameterized optimization formulation given by Equation \eqref{eqn:weighted_constraint}, where the upper bound was set as $ a = \frac{2}{\lvert S_l \rvert} $ and $ v_i = 1 $. The goal, therefore, was to identify the conditions under which the row-wise element constraint would dominate over the overall element-wise constraint.
\begin{align}
    \frac{2}{\abs{S_{l}} d_{l - 1}} &\leq \frac{\abs{S_{l}} ^2+4  \abs{P_{l}} }{2 (\abs{ P_{l}} +P_{l }) \abs{S_{l}} }, \nonumber\\
    d_{l - 1} &\ge \frac{4 (\abs{ P_{l}} +P_{l })}{\abs{S_{l}} ^2+4  \abs{P_{l}} }.
\end{align}
This thus informed us that when $P_l \leq 0$ ($\forall l, d_{l} \ge 0$), the element-wise constraint would always be greater than the element-wise, and if $P_l > 0$ and,
\begin{align}
       d_{l - 1} &\ge \frac{8 P_{l}}{\abs{S_{l}} ^2+4 P_{l}}.
\end{align}
By examining the maximum value of the bound, it was found that, due to the equation's symmetry concerning $ L_l $ and $ m_l $, solving for either the optimal value of $ m_l $ or $ L_l $ led to the optimal solution. This symmetry implied that both parameters contributed equivalently to the system, and thus, optimizing one in isolation was sufficient to determine the overall optimal configuration.
\begin{align}
    \frac{\partial}{\partial m_l}\frac{8 L_l m_l}{(L_l + m_l) ^2+4 L_l m_l} &= \frac{8 L_l (L_l-m_l) (L_l+m_l)}{\left(L_l^2+6 L_l m_l+m_l^2\right)^2},
\end{align}
solving for $0$ the optimal value was obtained when $m_l = \{-L_l, L_l\}$, where only the $m_l = L_l$ solution was kept due to the $P_l > 0$ constraint. Which gave the solution that (when $m_l = L_l$, $S_l = 2L, P_l = L_l^2$):
\begin{align}
    \frac{2}{\abs{S_{l}} d_{l - 1}} &\ge \frac{\abs{S_{l}} ^2+4  \abs{P_{l}} }{2 (\abs{ P_{l}} +P_{l }) \abs{S_{l}} }, \nonumber\\
    \frac{1}{\abs{L_l} d_{l - 1}} &\ge \frac{4 L_l ^2+4  L_l^2 }{8L_l^2 \abs{L_l} }, \nonumber\\
    \frac{1}{\abs{L_l} d_{l - 1}} &\ge \frac{1}{ \abs{L_l}}.
\end{align}
This demonstrated that even in the specific condition when $m_l = L_l$ and $d_l \leq 1$, the element and row-wise constraints would be equivalent to each other. This implied that the row-wise constraint would always be smaller than the element-wise constraint and should thus be the only one considered when constraining $C_l$ for $\forall \{2, \cdots, n\}$.
\subsubsection{$C_1$ constraints}

Upon analyzing the constraints derived for $C_1$, it was observed that a mutual dependence existed between $C_1$ and $\Lambda_1$. Specifically, the definition of $C_1$ necessitated the prior specification of $\Lambda_1$, and conversely, the determination of $\Lambda_1$ was contingent upon the specification of $C_1$. This interdependence introduced significant complexity in deriving an appropriate parameterization for $C_1$. As a result, an additional constraint was imposed on $C_1$ to address this issue, such that:
\begin{align}
    \abs{S_l} \sum_{j = 1}^{d_x} \abs{c_{1, ij}} \leq 1,
\end{align}
Which thus enforced the constraint that,
\begin{align}
G_l =& \Lambda_{l}^\top  (\abs{S_{l}} \abs{C_{l}} -2 P_{l} C_{l}^{\circ 2}) \nonumber \\ &+ 2 \abs{P_{l}} \boldsymbol{1}^\top \abs{Q_{l} -   \text{diag}( \text{diag}(Q_{l})) }, \\
    \lambda_{1,i } \ge&  \frac{G_{2}}{2 - \abs{S_{l}} \sum^{d_{x}}_{j = 1} \abs{c_{1,i j}} } \ge  G_{2},
\end{align}
where, $G_l$ represented the numerator of the $\Lambda_l$ parameterization. Enforcing this additional constraint on the row norm of $C_1$ thus imposed an upper bound of $\Lambda_1$, which no longer contained a dependence on $C_1$, breaking the cyclical parameterization. 

\subsection{LMI parameterization}

The eigenvalue distribution of the LMI was displayed below in Figure \ref{fig:eigenvalue_distribution} (The eigenvalue range was truncated to 10 times the quartile range; however, some of the eigenvalues have reached a magnitude of $-10^{11}$). To generate this distribution, all the parameters, the weights $C_l$ ( parameterized by $p_l$), and the biases $b_l$ were initialized with a uniform distribution. The biases and weights were initialized using the standard Kaiming initialization scheme, where the weights used tanh gains (i.e., scaling of 1 for tanh \cite{Kumar2017}) given that the variables $p_l$ were constrained by tanh.

\begin{figure}
    \centering
    \includegraphics[width=1\linewidth]{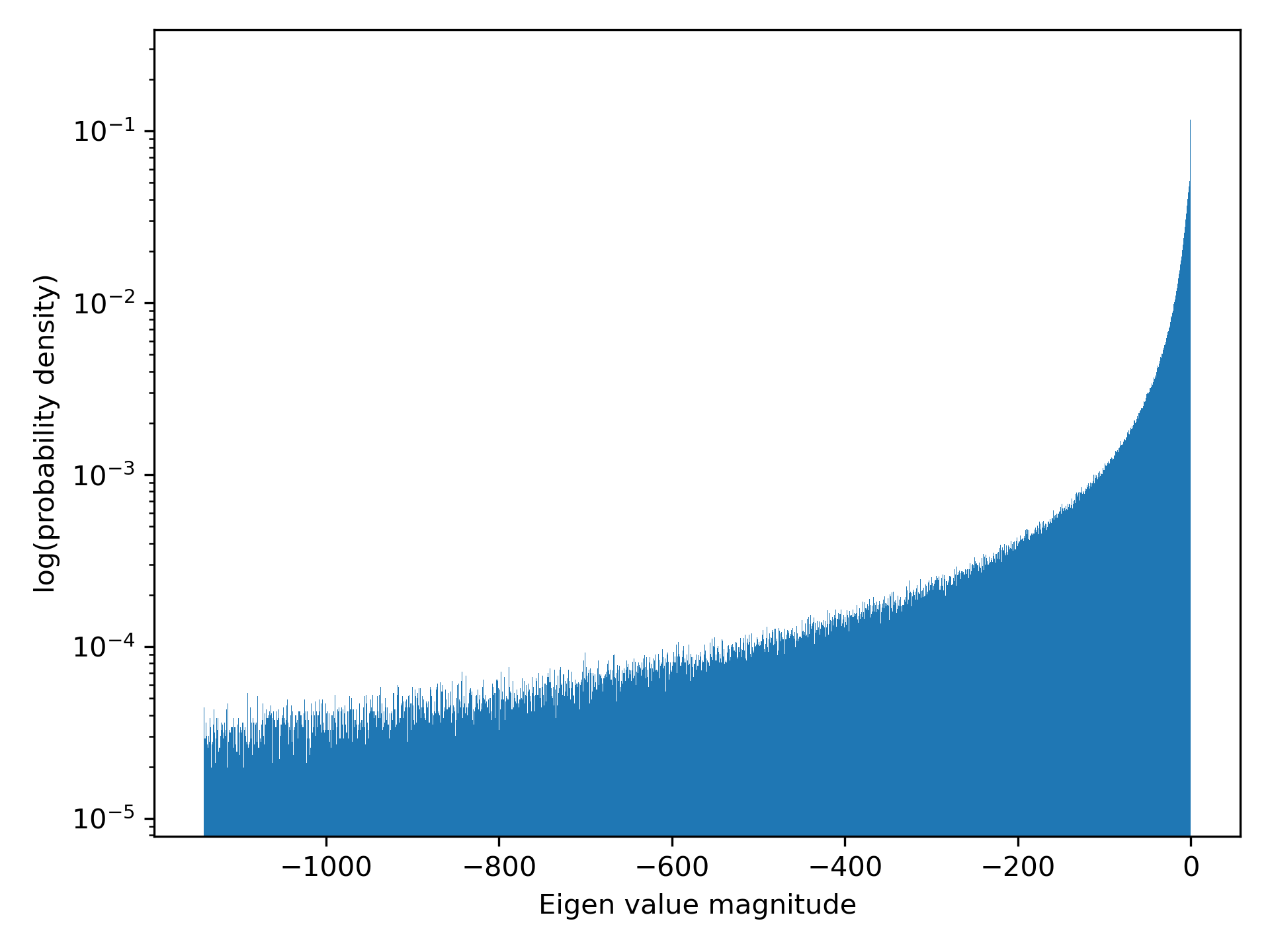}
    \caption{Eigenvalue distribution}
    \label{fig:eigenvalue_distribution}
\end{figure}

The constraints above, when implemented, thus generated the following example of Gershgorin circles for the LMI illustrated in Figure \ref{fig:lmi_gershgorin_circles}. The Figure demonstrates that the Gershgorin circles were all constrained on the negative real plane. 

\begin{figure}
    \centering
    \includegraphics[width=1\linewidth]{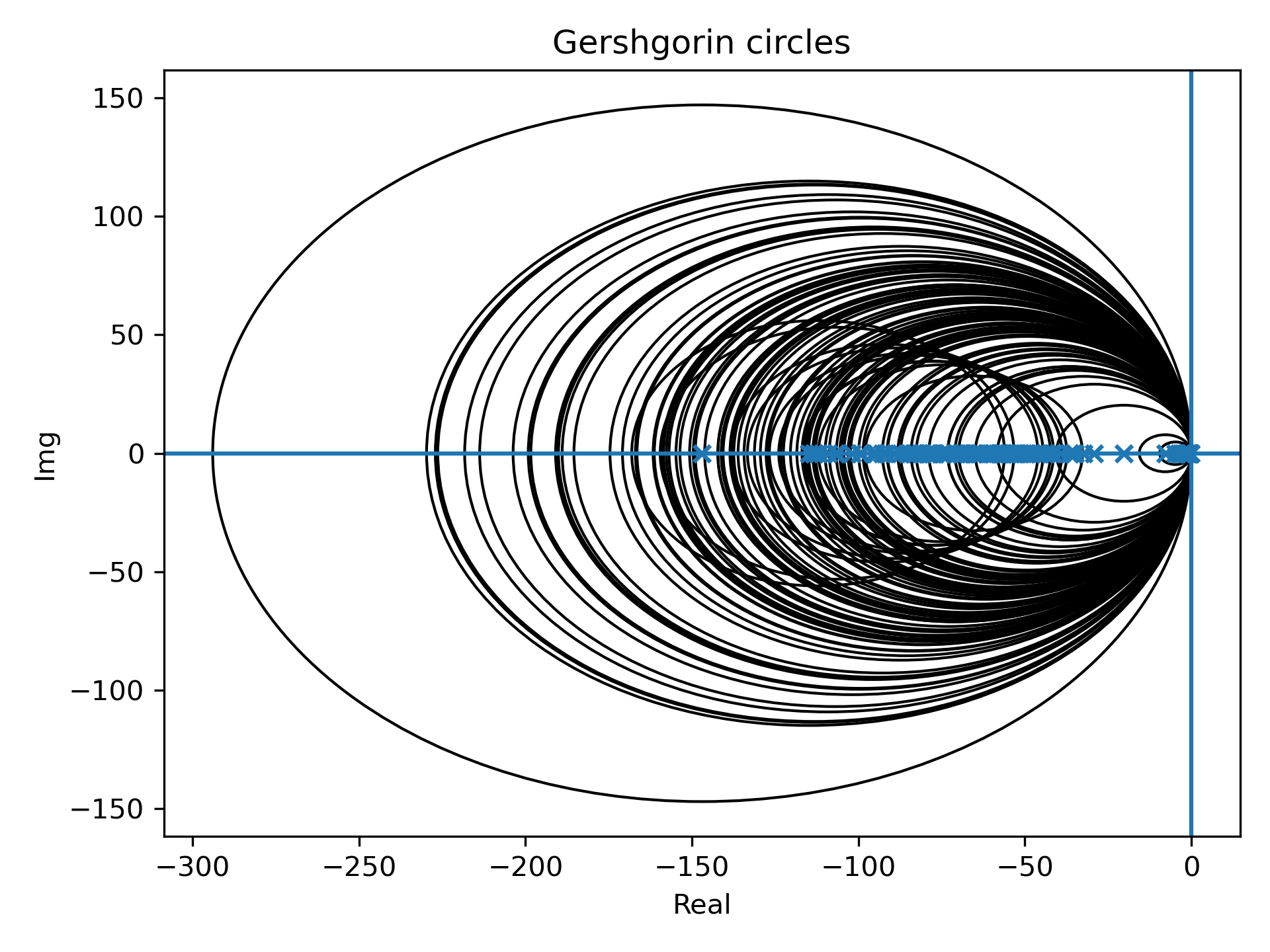}
    \caption{LMI Gershgorin Circles}
    \label{fig:lmi_gershgorin_circles}
\end{figure}

It was also interesting to observe that due to the recursive nature of the $\Lambda_l$ parameterization, the Gershgorin circles ended up encapsulating each other most of the time (this is not a general statement).

For the sake of completeness, the $L$ and $m$ constants of the activation functions defined in \href{https://pytorch.org/docs/stable/nn.html#non-linear-activations-weighted-sum-nonlinearity}{PyTorch} (assuming default values if not specified) were derived and defined in Table \ref{tab:activation_function_convecities}. It should be noted that the Hardshrink and RReLU could not be used due to their infinite $L, m$ constants; Hardshink has infinite $L, m$ due to its noncontinuous piece-wise definition, and PReLU due to its stochastic definition, which no longer made it's $L, m$ computable. Where,
\begin{align}
    \text{erfc}(z) &= 1 - \text{erf}(z), \\
    \text{erf}(z) &= \frac{2}{\sqrt{\pi}} \int_{0}^z e^{-t^2}dt.
\end{align}
\begin{table*}[ht!]
\centering
\caption{Convexity constants of the element-wise activation functions in PyTorch}
\label{tab:activation_function_convecities}
\begin{tabular}{|c|c|c|c|c|}
\hline
\textbf{Activation Function} & \textbf{L} & \textbf{m} & \textbf{S} & \textbf{P} \\ \hline 
ELU ($\alpha = 1$) \cite{Clevert2015}& $\max(1, \alpha)$ & 0 & $\max(1, \alpha)$ & 0 \\ \hline 
Hardshrink \cite{Cancino2002} & $\infty$ & 0 & $\infty$ & $\infty$ \\ \hline 
Hardsigmoid \cite{Courbariaux2015}& $\frac{1}{6}$ & 0 & $\frac{1}{6}$ & 0 \\ \hline 
Hardtanh \cite{collobert2004} & 1 & 0 & 1 & 0 \\ \hline 
Hardswish \cite{Howard2019} & 1.5 & -0.5 & 1 & -0.75 \\ \hline 
LeakyReLU ($\alpha = 1e^{-2}$) \cite{Maas2013} & 1 & $\alpha$ & $ 1 + \alpha $ & $\alpha$ \\ \hline 
LogSigmoid & 1 & 0 & 1 & 0 \\ \hline 
PReLU ($\alpha=\frac{1}{4}$) \cite{He2015PReLU}& 1 & $\alpha$ & $1 + \alpha$ & $\alpha$ \\ \hline 
ReLU \cite{McCulloch1943} & 1 & 0 & 1 & 0 \\ \hline 
ReLU6 \cite{Howard2017} & 1 & 0 & 1 & 0 \\ \hline 
RReLU \cite{Xu2015} & $\infty$ & $-\infty$ & $\infty$ & $\infty$ \\ \hline 
SELU \cite{Klambauer2017} &  $\alpha \times \text{scale} \approx  1.758099341$ & 0 & $\alpha \times \text{scale} \approx  1.758099341$  & 0.0 \\ \hline 
CELU \cite{Barron2017} & 1 & 0 & 1 & 0 \\ \hline 
GELU \cite{Hendrycks2016} &  $\frac{\text{erfc}(1)}{2}-\frac{1}{e \sqrt{\pi }}$ &  $\frac{1}{2} (\text{erf}(1)+1)+\frac{1}{e \sqrt{\pi }}$ & 1 & $\frac{\left(e \sqrt{\pi } (\text{erf}(1)+1)+2\right) \left(e \sqrt{\pi } \text{erfc}(1)-2\right)}{4 e^2 \pi }$ \\
   &  $\approx 1.128904145$ &  $\approx -0.1289041452$ &  & $\approx -0.145520424$ \\ \hline 
Sigmoid \cite{Sak2014} & 1 & 0 & 1 & 0 \\ \hline 
SiLU \cite{Elfwing2017} & 1.099839320 & -0.09983932013 & 1 & -0.1098072100 \\ \hline 
Softplus \cite{Zhou2016} & 1 & 0 & 1 & 0 \\ \hline 
Mish ($\alpha \ge \frac{1}{2}$) \cite{Misra2019} & 1.199678640 & -0.2157287822 & 0.8060623125 & -0.2204297485 \\ \hline 
Softshrink \cite{Cancino2002} & 1 & 0 & 1 & 0 \\ \hline 
Softsign \cite{Ping2017} & 1 & 0 & 1 & 0 \\ \hline 
Tanh \cite{Sak2014} & 1 & 0 & 1 & 0 \\ \hline 
Tanhshrink & 1 & 0 & 1 & 0 \\ \hline 
Threshold & 1 & 0 & 1 & 0 \\ \hline
\end{tabular}
\end{table*}
\section{Experiment}

Based on the designed network constraints, a network was thus generated. To run such a network due to the co-dependence of the $\Lambda_l, L_l$ and $m_l$ from the next layer and the first layer, the evaluation of the network needed to be run in two passes, a backward pass which computed the $\Lambda_l$ and $C_l$ parameters, as illustrated in Figure \ref{fig:backwards_pass}, 
\begin{figure*}
    \centering
    \includegraphics[width=1\linewidth]{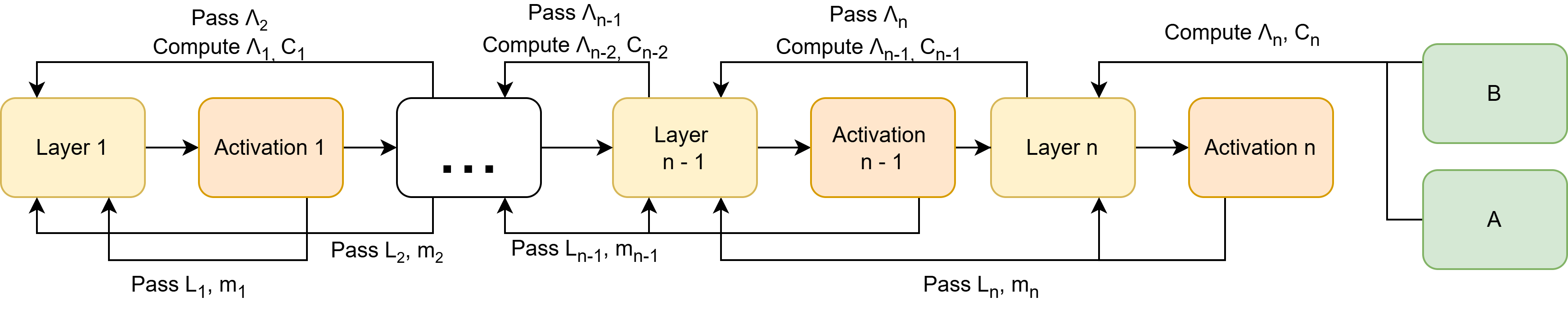}
    \caption{Backwards pass}
    \label{fig:backwards_pass}
\end{figure*}
and then, the forward pass performed the inferences using the computed parameters as a standard residual network as illustrated in \ref{fig:forward_pass}.
\begin{figure*}
    \centering
    \includegraphics[width=1\linewidth]{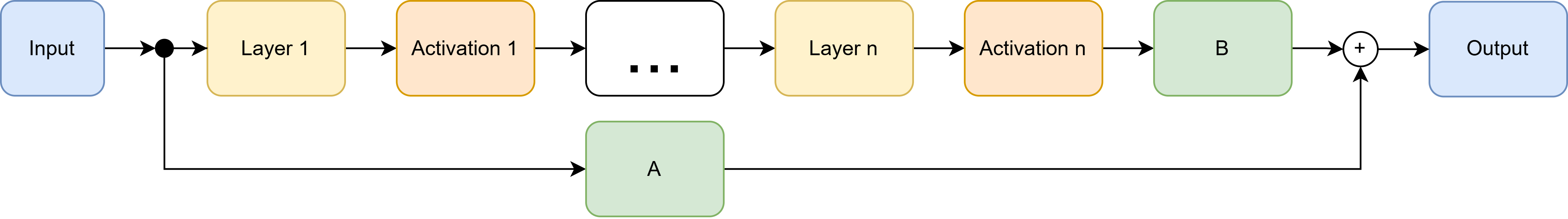}
    \caption{Forward pass}
    \label{fig:forward_pass}
\end{figure*}
It should be noted that it was not possible to make use of techniques such as a batch normalization \cite{Szegedy2016, Chen2017, He2015DeepResidual}, which is a common practice in more modern ResNet architectures. This was because normalization was not a constrained $\mathcal{L}$-Lipschitz formulation as the normalized features were computed as \cite{Ioffe2015, Gouk2021}:
\begin{align}
    \hat{x}^{(k)} &= \frac{x^{(k)} - \text{E}[x^{(k)}]}{\sqrt{\text{Var}[x^{(k)}]}},
\end{align}
Which could be represented as a linear layer where,
\begin{align}
C_b &= \text{diag}\left(\sqrt{\text{Var}[x^{(1)}]}, \cdots, \sqrt{\text{Var}[x^{(d)}]} \right)^{-1}, \\
b_b &= -\begin{bmatrix}
    \frac{\text{E}[x^{(1)}]}{\sqrt{\text{Var}[x^{(1)}]}} & \cdots & \frac{\text{E}[x^{(d)}]}{\sqrt{\text{Var}[x^{(d)}]}}
\end{bmatrix}^\top.
\end{align}
Where it would only be in particular conditions that the batch normalization would follow the constraints posed by Table \ref{tab:matrix_constrains}; this is due to the variance scaling term being very hard to control and is defined by the dataset that is inputted into the system.

To test the network's capabilities, it was initially tested on a straightforward dataset to fit $y = \frac{1}{2} \sin(x)$ on $x \in (-2\pi, 2\pi )$, which is a $\frac{1}{2}$ Lipchitz bounded function as $arg \max_{x} \frac{\partial}{\partial x} \frac{1}{2} \sin(x) = \frac{1}{2} arg \max_{x} \cos(x)  = \frac{1}{2}$, which should thus make it possible to train the network on. However, it was noticed that no matter what optimizer, activation function, size or number of hidden layers, learning rate, or other hyper-parameters used, the system would be unable to fit the function to any degree of accuracy using the MSE loss function. This is illustrated from the output results in Figures \ref{fig:output_optimizers} and \ref{fig:loss_optimizers}.
\begin{figure}[!ht]
    \centering
    \includegraphics[width=1\linewidth]{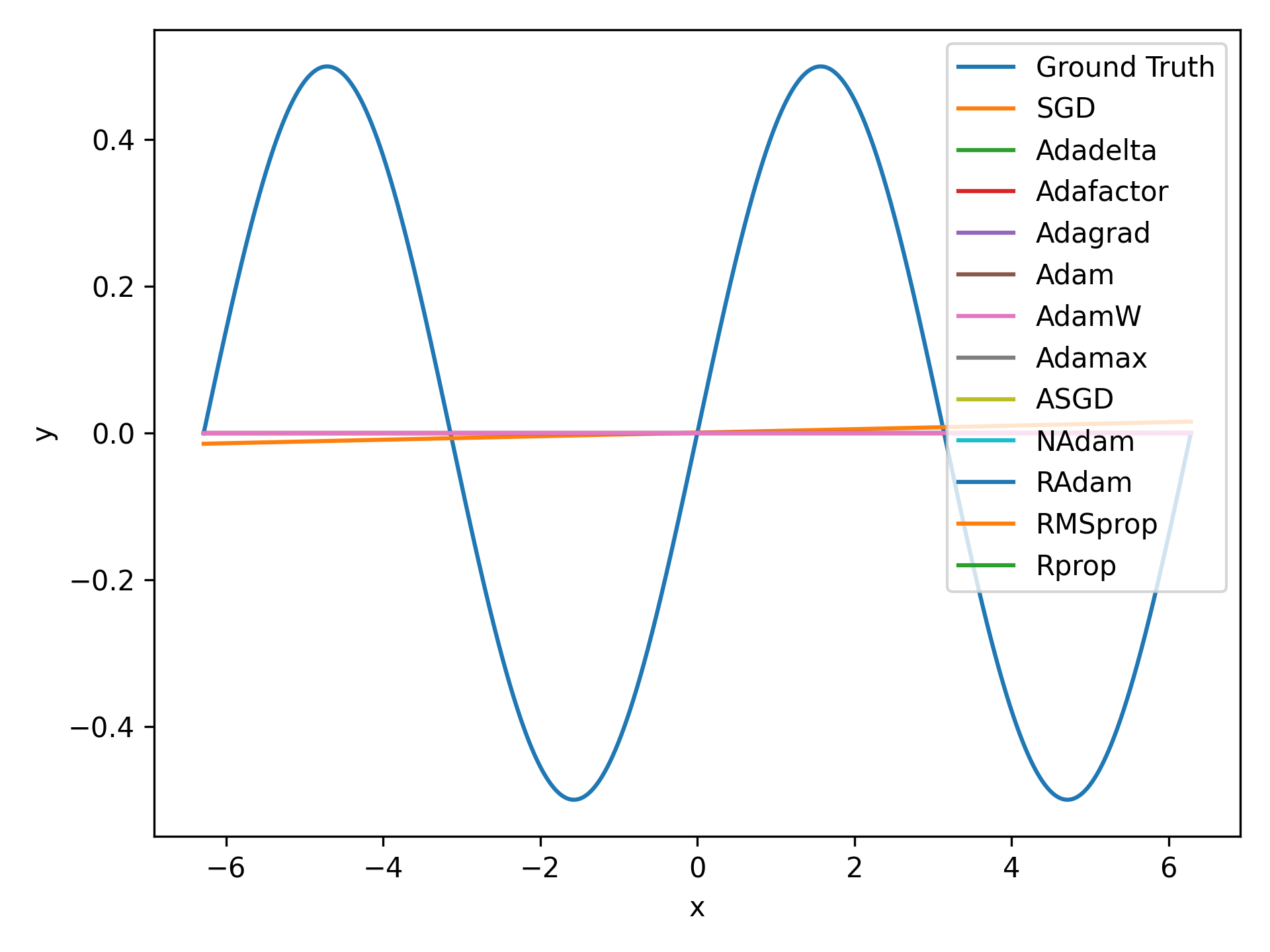}
    \caption{Trained L-Lipschitz network output over multiple optimizers}
    \label{fig:output_optimizers}
\end{figure}
\begin{figure}[!ht]
    \centering
    \includegraphics[width=1\linewidth]{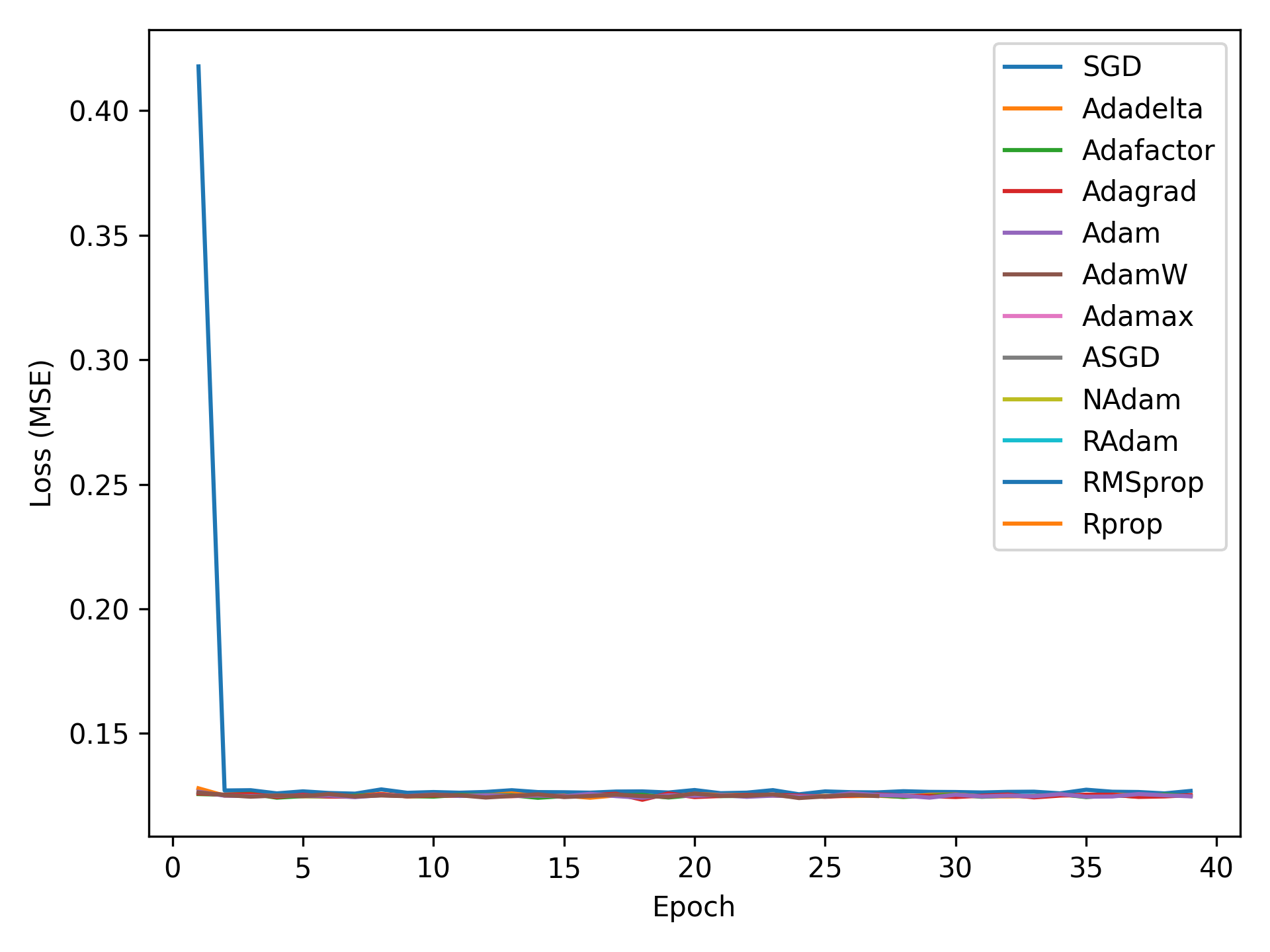}
    \caption{MSE training loss over multiple optimizers}
    \label{fig:loss_optimizers}
\end{figure}
The network's output looked like a single regression line, and no expressiveness of the network could be observed. For a minimum error to fit with this line, it would have to be a line defined as:
\begin{align}
    \mathcal{L}_l &= \frac{1}{4 \pi} \int_{-2\pi}^{2 \pi} (ax + b - \sin(x))^2 dx, \nonumber \\
                &= \frac{1}{6}   \left(4 a \left(2 \pi ^2 a+3\right)+6 b^2+3\right).
\end{align}
Whose minimum would be defined when $a = -\frac{3}{4 \pi^2}$ and $b = 0$, with a total error of $\mathcal{L}_l = \frac{1}{2} - \frac{3}{4 \pi^2}$.  After further inspection of the network, the main culprit in the decay issue was $C_1$'s magnitude as the $\Lambda_1$ magnitude in the network became very large, which caused an over-constraining of the $C_1$ matrix parameter. Given that the Gershgorin circle theorem is only an approximation of the eigenvalue locations, this caused the overall network's compounding approximations to over-constrain the network and thus disable the non-linear portion of the system as such, the network comes the simple $y \approx Ax$ formula, where $A$ is the parameterized diagonal matrix. It is thus sadly noted that this type of network with the current type of parameterization for the weights and biases of the system does not function as a universal function approximator. As such this paper is only able to elaborate on the current methodology for solving the LMI using the Gershgorin circle for more complicated general LMI structures; however, it should be noted that if the LMI follows a more standard matrix structure such as a tri-diagonal form \cite{Xu2024} common in a standard Feedforward Neural Network (FNN) or the likes it is possible to derive more exact eigenvalue constraints on the system.

\section{Conclusion}

This study rigorously derived constraints for the pseudo-tri-diagonal matrix LMI representing a residual network. Given the absence of explicit eigenvalue computations for the tri-diagonal matrix with off-diagonal elements, the Gershgorin circle theorem was employed to approximate the eigenvalue locations of this complex recursive system. The system was decomposed into three distinct blocks, and weight parameterizations were systematically derived and summarized in Table \ref{tab:matrix_constrains}.

A two-step process was detailed once the constraints were established and the network was constructed. Due to the residual network's recursive nature, the normalization parameters needed to be computed and propagated in advance to enable the creation of layer weight parameterizations. This stage was defined as the backward pass. The forward pass involved performing inference on the network.

Upon evaluating the implemented network, it was observed that the Gershgorin circle approximations caused the normalization factors of the inner layers to deactivate the network's non-linear components. Consequently, based on the Gershgorin formulation, the final implementation proved ineffective and unsuitable as a functional approximation. This study establishes a foundation for future research into alternative eigenvalue approximations and refined parameterization strategies, advancing robust deep learning architectures' theoretical and practical development.


\bibliographystyle{IEEEtran}
\bibliography{main}

\appendices

\end{document}